\def\eqref#1{equation~\ref{#1}}
\def\Eqref#1{Equation~\ref{#1}}
\def\1{\bm{1}}
\DeclareMathAlphabet{\mathsfit}{\encodingdefault}{\sfdefault}{m}{sl}
\SetMathAlphabet{\mathsfit}{bold}{\encodingdefault}{\sfdefault}{bx}{n}
\newcommand{\R}{\mathbb{R}}
\newtheorem{lemma}[]{Lemma}
\newtheorem{theorem}[]{Theorem}
\theoremstyle{definition}
\newtheorem{definition}{Definition}[section]
\newtheorem{remark}{Remark}[section]
\newcolumntype{P}[1]{>{\centering\arraybackslash}p{#1}}
\newcommand*\samethanks[1][\value{footnote}]{\footnotemark[#1]}
\title{Truth or backpropaganda?  An empirical investigation of deep learning theory}
\author{
  Micah Goldblum\thanks{Authors contributed equally.}\\
  Department of Mathematics\\
  University of Maryland\\
  \texttt{goldblum@umd.edu} \\
  \And
  Jonas Geiping\samethanks[1]\\
  Department of Computer Science and Electrical Engineering \\
  University of Siegen \\
  \texttt{jonas.geiping@uni-siegen.de} \\
  \And
  Avi Schwarzschild \\
  Department of Mathematics \\
  University of Maryland \\
  \texttt{avi1@umd.edu} \\
  \And
  Michael Moeller\\
  Department of Computer Science and Electrical Engineering \\
  University of Siegen \\
  \texttt{michael.moeller@uni-siegen.de}\\
  \And
  Tom Goldstein \\
  Department of Computer Science\\
  University of Maryland \\
  \texttt{tomg@umd.edu} \\
}
\begin{document}

\maketitle

\begin{abstract}
	
	We empirically evaluate common assumptions about neural networks that are widely held by practitioners and theorists alike.  In this work, we: (1) prove the widespread existence of suboptimal local minima in the loss landscape of neural networks, and we use our theory to find examples; (2) show that small-norm parameters are not optimal for generalization; (3) demonstrate that ResNets do not conform to wide-network theories, such as the neural tangent kernel, and that the interaction between skip connections and batch normalization plays a role;  (4) find that rank does not correlate with generalization or robustness in a practical setting.

\end{abstract}

\section{Introduction}

\par Modern deep learning methods are descendent from such long-studied fields as statistical learning, optimization, and signal processing, all of which were built on mathematically rigorous foundations. In statistical learning, principled kernel methods have vastly improved the performance of SVMs and PCA \citep{suykens1999least, scholkopf1997kernel}, and boosting theory has enabled weak learners to generate strong classifiers \citep{schapire1990strength}.  Optimizers in deep learning are borrowed from the field of convex optimization , where momentum optimizers \citep{nesterov1983method} and conjugate gradient methods provably solve ill-conditioned problems with high efficiency \citep{hestenes1952methods}.
Deep learning harnesses foundational tools from these mature parent fields.

Despite its rigorous roots, deep learning has driven a wedge between theory and practice. 
Recent theoretical work has certainly made impressive strides towards understanding optimization and generalization in neural networks. But doing so has required researchers to make strong assumptions and study restricted model classes.

In this paper, we seek to understand whether deep learning theories accurately capture the behaviors and network properties that make realistic deep networks work. 
Following a line of previous work, such as \cite{swirszcz_local_2016},  \cite{zhang_understanding_2016}, \cite{balduzzi_shattered_2017} and \cite{santurkar_how_2018}, we  put the assumptions and conclusions of deep learning theory to the test using experiments with both toy networks and realistic ones. We focus on the following important theoretical issues:
\begin{itemize}
\item Local minima:  Numerous theoretical works argue that all local minima of neural loss functions are globally optimal or that all local minima are nearly optimal.   In practice, we find highly suboptimal local minima in realistic neural loss functions, and we discuss reasons why suboptimal local minima exist in the loss surfaces of deep neural networks in general.
 
\item Weight decay and parameter norms:  Research inspired by Tikhonov regularization suggests that low-norm minima generalize better, and for many, this is an intuitive justification for simple regularizers like weight decay.  Yet for neural networks, it is not at all clear which form of $\ell_2$-regularization is optimal. We show this by constructing a simple alternative: biasing solutions toward a non-zero norm still works and can even measurably improve performance for modern architectures.

\item Neural tangent kernels and the wide-network limit: We investigate theoretical results concerning neural tangent kernels of realistic architectures. While stochastic sampling of the tangent kernels suggests that theoretical results on tangent kernels of multi-layer networks may apply to some multi-layer networks and basic convolutional architectures, the predictions from theory do not hold for practical networks, and the trend even reverses for ResNet architectures.  We show that the combination of skip connections and batch normalization is critical for this trend in ResNets.

\item Rank:  Generalization theory has provided guarantees for the performance of low-rank networks.  However, we find that regularization which encourages high-rank weight matrices often outperforms that which promotes low-rank matrices.  This indicates that low-rank structure is not a significant force behind generalization in practical networks. We further investigate the adversarial robustness of low-rank networks, which are thought to be more resilient to attack, and we find empirically that their robustness is often lower than the baseline or even a purposefully constructed high-rank network.
\end{itemize}

\section{Local minima in loss landscapes:  Do suboptimal minima exist?}

It is generally accepted that ``in practice, poor local minima are rarely a problem with large networks.'' \citep{lecun_deep_2015}. However, exact theoretical guarantees for this statement are elusive. Various theoretical studies of local minima have investigated spin-glass models \citep{choromanska_loss_2014}, deep linear models \citep{laurent2018deep,kawaguchi_deep_2016}, parallel subnetworks \citep{haeffele_global_2017}, and dense fully connected models \citep{nguyen_loss_2018} and have shown that either all local minima are global or all have a small optimality gap.  The apparent scarcity of poor local minima has lead practitioners to develop the intuition that bad local minima (``bad'' meaning high loss value and suboptimal training performance) are practically non-existent. 

To further muddy the waters, 
some theoretical works prove the {\em existence} of local minima. Such results exist for simple fully connected architectures \citep{swirszcz_local_2016}, single-layer networks \citep{liang_understanding_2018, yun_small_2018}, and two-layer ReLU networks \citep{safran_spurious_2017}.
For example, \citep{yun2018small} show that local minima exist in single-layer networks with univariate output and unique datapoints. The crucial idea here is that all neurons are activated for all datapoints at the suboptimal local minima.
Unfortunately, these existing analyses of neural loss landscapes require strong assumptions (e.g. random training data, linear activation functions, fully connected layers, or extremely wide network widths) --- so strong, in fact, that it is reasonable to question whether these results have any bearing on practical neural networks or describe the underlying cause of good optimization performance in real-world settings.

In this section, we investigate the existence of suboptimal local minima from a theoretical perspective and an empirical one.  If suboptimal local minima exist, they are certainly hard to find by standard methods (otherwise training would not work).  Thus, we present simple theoretical results that inform us on how to construct non-trivial suboptimal local minima, concretely generalizing previous constructions, such as those by \citep{yun2018small}.  Using experimental methods inspired by theory, we easily find suboptimal local minima in the loss landscapes of a range of classifiers.

Trivial local minima are easy to find in ReLU networks -- consider the case where bias values are sufficiently low so that the ReLUs are ``dead'' (i.e. inputs to ReLUs are strictly negative).  Such a point is trivially a local minimum.  Below, we make a more subtle observation that multilayer perceptrons (MLPs) must have non-trivial local minima, provided there exists a linear classifer that performs worse than the neural network (an assumption that holds for virtually any standard benchmark problem).  Specifically, we show that MLP loss functions contain local minima where they behave identically to a linear classifier on the same data. 

We now define a family of low-rank linear functions which represent an MLP.  Let ``rank-$s$ affine function'' denote an operator of the form  $G(\mathbf{x}) = A\mathbf{x} +\mathbf{b}$ with $\text{rank}(A) = s$.

\begin{definition}
Consider a family of functions, $\{F_\phi: \mathbb{R}^m \to \mathbb{R}^n \}_{\phi\in\mathbb{R}^P}$ parameterized by $\phi.$  We say this family has \emph{rank-$s$ affine expression} if for all rank-$s$ affine functions $G:\mathbb{R}^m \to \mathbb{R}^n$ and finite subsets $\Omega\subset \mathbb{R}^m$, there exists $\phi$ with $F_\phi(\mathbf{x})=G(\mathbf{x}), \, \forall \mathbf{x}\in\Omega$. If $s = \min(n,m)$ we say that this family has \emph{full affine expression}.
\end{definition}

We investigate a family of L-layer MLPs with ReLU activation functions, $\{F_\phi: \mathbb{R}^m \to \mathbb{R}^n \}_{\phi\in\Phi}$, and parameter vectors $\phi$, i.e., $\phi = (A_1, \mathbf{b}_1, A_2, \mathbf{b}_2, \hdots, A_L, \mathbf{b}_L)$, $F_{\phi}(\mathbf{x})=H_L(f(H_{L-1}...f(H_1(\mathbf{x}))))$, where $f$ denotes the ReLU activation function and $H_i(\mathbf{z}) = A_i\mathbf{z} + \mathbf{b}_i$. Let $A_i \in \mathbb{R}^{n_{i} \times n_{i-1}}$, $\mathbf{b}_i\in \mathbb{R}^{n_i}$ with $n_0 = m$ and $n_L = n$. 
\begin{lemma}\label{ConstructiveLemma}
Consider a family of L-layer multilayer perceptrons with ReLU activations $\{F_\phi: \mathbb{R}^m \to \mathbb{R}^n \}_{\phi \in \Phi}$, and let $s = \min_i n_i$ be the minimum layer width. Such a family has rank-$s$ affine expression. 
\end{lemma}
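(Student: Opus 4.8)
The plan is to construct, for an arbitrary rank-$s$ affine function $G(\mathbf{x}) = A\mathbf{x} + \mathbf{b}$ and an arbitrary finite set $\Omega\subset\mathbb{R}^m$, explicit parameters $\phi = (A_1,\mathbf{b}_1,\dots,A_L,\mathbf{b}_L)$ for which $F_\phi(\mathbf{x}) = G(\mathbf{x})$ for all $\mathbf{x}\in\Omega$. The only obstruction is that $f$ is nonlinear; the idea is to exploit that $\Omega$ is \emph{finite}, hence bounded, so that by adding a sufficiently large constant bias at each hidden layer one can force every pre-activation that occurs on $\Omega$ to be nonnegative, on which set $f$ acts as the identity and the network degenerates to an affine map. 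It then remains only to arrange that this affine map equals $G$.

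First I would factor the weights. Since $\mathrm{rank}(A) = s \le n_i$ for every layer width $n_i$ (in particular $s\le n_0 = m$ and $s\le n_L = n$), I can take a rank factorization $A = UV$ with $U\in\mathbb{R}^{n\times s}$ and $V\in\mathbb{R}^{s\times m}$ and thread it through the prescribed dimensions: set $M_1 = \left[\begin{smallmatrix} V\\ 0\end{smallmatrix}\right]\in\mathbb{R}^{n_1\times m}$, set $M_i = \left[\begin{smallmatrix} I_s & 0\\ 0 & 0\end{smallmatrix}\right]\in\mathbb{R}^{n_i\times n_{i-1}}$ for $1<i<L$, and set $M_L = \left[\,U\ \ 0\,\right]\in\mathbb{R}^{n\times n_{L-1}}$. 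A direct computation shows that $M_i M_{i-1}\cdots M_1\mathbf{x}$ has first $s$ coordinates $V\mathbf{x}$ and the rest zero for $i<L$, and hence $M_L M_{L-1}\cdots M_1 = UV = A$. I then put $A_i := M_i$ for all $i$.

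Next I would pin down the biases inductively so that every ReLU acts as the identity on $\Omega$. Suppose $\mathbf{b}_1,\dots,\mathbf{b}_{i-1}$ are chosen and, by induction, the layer-$(i-1)$ output $\mathbf{z}_{i-1}(\mathbf{x})$ is affine in $\mathbf{x}$ on $\Omega$, say $\mathbf{z}_{i-1}(\mathbf{x}) = P_{i-1}\mathbf{x} + \mathbf{c}_{i-1}$ with $P_{i-1} = M_{i-1}\cdots M_1$ (and $P_0 = I$, $\mathbf{c}_0 = \mathbf{0}$). As $\mathbf{x}$ ranges over the finite set $\Omega$, the quantity $M_i\mathbf{z}_{i-1}(\mathbf{x})$ takes finitely many values, so I can choose $\mathbf{b}_i\in\mathbb{R}^{n_i}$ large enough coordinatewise that $M_i\mathbf{z}_{i-1}(\mathbf{x}) + \mathbf{b}_i \ge \mathbf{0}$ for every $\mathbf{x}\in\Omega$. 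Then $f$ is the identity on these inputs, so $\mathbf{z}_i(\mathbf{x}) = M_i\mathbf{z}_{i-1}(\mathbf{x}) + \mathbf{b}_i = P_i\mathbf{x} + \mathbf{c}_i$ with $P_i = M_i P_{i-1}$ and accumulated shift $\mathbf{c}_i = M_i\mathbf{c}_{i-1} + \mathbf{b}_i$, which closes the induction for $i = 1,\dots,L-1$.

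Finally, since layer $L$ carries no ReLU, the network output on $\mathbf{x}\in\Omega$ is $M_L\mathbf{z}_{L-1}(\mathbf{x}) + \mathbf{b}_L = M_L P_{L-1}\mathbf{x} + M_L\mathbf{c}_{L-1} + \mathbf{b}_L = A\mathbf{x} + (M_L\mathbf{c}_{L-1} + \mathbf{b}_L)$, using $M_L P_{L-1} = A$ from the first step. Choosing $\mathbf{b}_L := \mathbf{b} - M_L\mathbf{c}_{L-1}$ makes this exactly $G(\mathbf{x})$ on $\Omega$, which proves the lemma; in the special case $\min_i n_i = \min(m,n)$ the same construction yields full affine expression. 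I expect the only genuinely nontrivial point to be the factorization step --- that a rank-$s$ matrix can be written as a product of matrices with the given intermediate dimensions, which is precisely where the hypothesis $s = \min_i n_i$ is used --- together with the routine but easy-to-botch bookkeeping of the accumulated bias shift $\mathbf{c}_{L-1}$ that must be cancelled at the output layer; the finiteness (hence boundedness) of $\Omega$ is essential in the bias-selection step and cannot be dropped.
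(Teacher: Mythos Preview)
Your proposal is correct and follows essentially the same approach as the paper: factor $A$ through rank-$s$ matrices padded with zeros to fit the layer widths, push the biases high enough that every ReLU acts as the identity on the finite set $\Omega$, and cancel the accumulated shift at the output layer. The only cosmetic differences are that the paper uses the SVD $A=U\Sigma V$ rather than a generic rank factorization, and fixes a single constant $c$ upfront (with intermediate biases equal to zero on the first $s$ ``signal'' coordinates and $c$ on the remaining ones) instead of choosing the biases inductively layer by layer.
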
 
\begin{proof}
The idea of the proof is to use the singular value decomposition of any rank-$s$ affine function to construct the MLP layers and pick a bias large enough for all activations to remain positive. See Appendix \ref{proof:lem1}.
\end{proof}

The ability of MLPs to represent linear networks allows us to derive a theorem which implies that arbitrarily deep MLPs have local minima at which the performance of the underlying model on the training data is equal to that of a (potentially low-rank) linear model. In other words, neural networks inherit the local minima of elementary linear models.

\begin{theorem}\label{thm:local_minima_linear}
Consider a training set, $\{(\mathbf{x}_i,y_i)\}_{i=1}^N$, a family $\{F_{\phi}\}_{\phi}$ of MLPs with $s = \min_i n_i$ being the smallest width. Consider a parameterized affine function $G_{A,\mathbf{b}}$ solving
\begin{equation}
    \min_{A,\mathbf{b}} \mathcal{L}(G_{A,\mathbf{b}}; \{(\mathbf{x}_i,y_i)\}_{i=1}^N), \qquad \text{subject to } \text{rank}(A)\leq s,
\end{equation}
for a continuous loss function $\mathcal{L}$. Then, for each local minimum, $(A', \mathbf{b}')$, of the above training problem, there exists a local minimum, $\phi'$, of the MLP loss $\mathcal{L}(F_{\phi}; \{(\mathbf{x}_i,y_i)\}_{i=1}^N)$ with the property that $F_{\phi'}(\mathbf{x}_i)=G_{A', \mathbf{b}'}(\mathbf{x}_i)$ for $i=1, 2, ..., N$.
\end{theorem}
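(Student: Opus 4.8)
The plan is to produce the desired $\phi'$ by applying Lemma~\ref{ConstructiveLemma} to the optimal affine map $G_{A',\mathbf{b}'}$, and then to argue that the particular parameter vector produced by that construction sits in a region of parameter space where the MLP behaves \emph{exactly} like a rank-$\le s$ affine function on the training inputs. A local minimum of the rank-constrained linear problem then pulls back to a local minimum of the MLP loss along this correspondence.

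First I would invoke Lemma~\ref{ConstructiveLemma}: since $\mathrm{rank}(A')\le s$, the map $G_{A',\mathbf{b}'}$ is a rank-$\le s$ affine function (the borderline case $\mathrm{rank}(A')<s$ is handled either by running the same SVD construction with extra zero singular values, or by a routine perturbation of $A'$ that does not change its values on the finite set $\{\mathbf{x}_i\}$), so there is $\phi' = (A_1,\mathbf{b}_1,\dots,A_L,\mathbf{b}_L)\in\Phi$ with $F_{\phi'}(\mathbf{x}_i)=G_{A',\mathbf{b}'}(\mathbf{x}_i)$ for all $i$. The extra features I would extract from that construction are: (i) every ReLU preactivation, evaluated at every training point $\mathbf{x}_i$, is strictly positive at $\phi'$; and (ii) the composed weight matrix satisfies $A_L\cdots A_1 = A'$, with the composed bias equal to $\mathbf{b}'$.

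Next I would use the openness of the ``all units active'' condition. Each preactivation at each $\mathbf{x}_i$ is a continuous function of $\phi$, so there is an open neighborhood $U\ni\phi'$ on which all of them stay strictly positive; on $U$ every ReLU acts as the identity on each training input, hence $F_\phi(\mathbf{x}_i) = M(\phi)\mathbf{x}_i + \mathbf{c}(\phi)$, where $M(\phi):=A_L\cdots A_1$ and $\mathbf{c}(\phi)$ is the composed bias, both continuous in $\phi$ on $U$, with $M(\phi')=A'$ and $\mathbf{c}(\phi')=\mathbf{b}'$. Since $\mathrm{rank}(A_i)\le\min(n_i,n_{i-1})$, the product obeys $\mathrm{rank}(M(\phi))\le\min_i n_i = s$, so $(M(\phi),\mathbf{c}(\phi))$ is feasible for the rank-constrained problem for every $\phi\in U$. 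Because $\mathcal{L}$ depends only on the predictions at the training inputs, $\mathcal{L}(F_\phi;\{(\mathbf{x}_i,y_i)\})=\mathcal{L}(G_{M(\phi),\mathbf{c}(\phi)};\{(\mathbf{x}_i,y_i)\})$ on $U$, and in particular $\mathcal{L}(F_{\phi'};\cdot)=\mathcal{L}(G_{A',\mathbf{b}'};\cdot)$.

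Finally I would close by contradiction: were $\phi'$ not a local minimum of $\phi\mapsto\mathcal{L}(F_\phi;\cdot)$, then arbitrarily close to $\phi'$ inside $U$ there would be $\phi$ with $\mathcal{L}(F_\phi;\cdot)<\mathcal{L}(F_{\phi'};\cdot)$; pushing through the continuous map $(M,\mathbf{c})$, which equals $(A',\mathbf{b}')$ at $\phi'$, this produces feasible points $(M(\phi),\mathbf{c}(\phi))$ arbitrarily close to $(A',\mathbf{b}')$ with strictly smaller loss, contradicting local minimality of $(A',\mathbf{b}')$ for the constrained problem. I expect the main obstacle to be exactly this last translation step: one must make sure the Lemma~\ref{ConstructiveLemma} construction delivers $M(\phi')=A'$ itself (not merely \emph{some} factorization agreeing with $A'$ only on $\{\mathbf{x}_i\}$), so the continuity argument lands back precisely at $(A',\mathbf{b}')$, and one must confirm that $M(\phi)$ stays rank-$\le s$ throughout $U$ rather than only at $\phi'$ — both following from the factored form $M(\phi)=A_L\cdots A_1$, but together being the crux of why the correspondence is with the \emph{rank-$s$-constrained} linear problem and not an unconstrained one.
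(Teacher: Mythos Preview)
Your proposal is correct and follows essentially the same route as the paper: invoke the Lemma~\ref{ConstructiveLemma} construction to get $\phi'$ with all preactivations strictly positive on the training set, use openness of that condition to linearize $F_\phi$ as $A(\phi)\mathbf{x}+\mathbf{b}(\phi)$ on a neighborhood, observe $\mathrm{rank}(A(\phi))\le s$ from the factored form, and pull back local optimality via the continuous map $\phi\mapsto(A(\phi),\mathbf{b}(\phi))$. The paper phrases the final step directly (shrink to a smaller ball $B(\phi',\delta_2)$ so that $(A(\phi),\mathbf{b}(\phi))$ lands inside the ball $D$ witnessing local optimality of $(A',\mathbf{b}')$) rather than by contradiction, but the content is identical; your explicit attention to why $M(\phi')=A'$ and $\mathbf{c}(\phi')=\mathbf{b}'$ as exact equalities (not just agreement on $\{\mathbf{x}_i\}$) is a point the paper uses implicitly but does not spell out.
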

\begin{proof}
See appendix \ref{proof:thm1}.
\end{proof}

The proof of the above theorem constructs a network in which all activations of all training examples are positive, generalizing previous constructions of this type such as \citet{yun2018small} to more realistic architectures and settings.   Another paper has employed a similar construction concurrently to our own work \citep{he2020nonlinearities}.  We do expect that the general problem in expressivity occurs every time the support of the activations coincides for all training examples, as the latter reduces the deep network to an affine linear function (on the training set), which relates to the discussion in \citet{balduzzi_shattered_2017}. We test this hypothesis below by initializing deep networks with biases of high variance.

\begin{remark}[CNN and more expressive local minima]
Note that the above constructions of Lemma \ref{ConstructiveLemma} and Theorem \ref{thm:local_minima_linear} are not limited to MLPs and could be extended to convolutional neural networks with suitably restricted linear mappings $G_\phi$ by using the convolution filters to represent identities and using the bias to avoid any negative activations on the training examples.  Moreover, shallower MLPs can similarly be embedded into deeper MLPs recursively by replicating the behavior of each linear layer of the shallow MLP with several layers of the deep MLP.  Linear classifiers, or even shallow MLPs, often have higher training loss than more expressive networks.  Thus, we can use the idea of Theorem 1 to find various suboptimal local minima in the loss landscapes of neural networks.  We confirm this with subsequent experiments.
\end{remark}

We find that initializing a network at a point that approximately conforms to Theorem 1 is enough to get trapped in a bad local minimum. We verify this by training a linear classifier on CIFAR-10 with weight decay, (which has a test accuracy of $40.53\%$, loss of $1.57$, and gradient norm of $0.00375$ w.r.t to the logistic regression objective). We then initialize a multilayer network as described in Lemma \ref{ConstructiveLemma} to approximate this linear classifier and recompute these statistics on the full network (see Table \ref{tab:localopt}).
When training with this initialization, the gradient norm drops futher, moving parameters even closer to the linear minimizer.  The final training result still yields positive activations for the entire training dataset.

Moreover, any isolated local minimum of a linear network results in many local minima of an MLP $F_{\phi'}$, as the weights $\phi'$ constructed in the proof of Theorem \ref{thm:local_minima_linear} can undergo transformations such as scaling, permutation, or even rotation without changing $F_{\phi'}$ as a function during inference, i.e. $F_{\phi'}(\mathbf{x}) = F_{\phi}(\mathbf{x})$ for all $\mathbf{x}$ for an infinite set of parameters $\phi$, as soon as $F$ has at least one hidden layer.  

While our first experiment initializes a deep MLP at a local minimum it inherited from a linear one to empirically illustrate our findings of Theorem \ref{thm:local_minima_linear}, Table \ref{tab:localopt} also illustrates that similarly bad local minima are obtained when choosing large biases (third row) and choosing biases with large variance (fourth row) as conjectured above. To significantly reduce the bias, however, and still obtain a sub-par optimum, we need to rerun the experiment with SGD without momentum, as shown in the last row, reflecting common intuition that momentum is helpful to move away from bad local optima.

\begin{table*}
\caption{Local minima for MLPs generated via various initializations.  We show loss, euclidean norm of the gradient vector, and minimum eigenvalue of the Hessian before and after training. We use $500$ iterations of the power method on a shifted Hessian matrix computed on the full dataset to find the minimum eigenvalue\label{tab:localopt}. The experiment in the last row is trained with no momentum (NM).} 
\centering
	\begin{tabular}{crrrcrrr}\toprule
		& \multicolumn{3}{c}{At Initialization} && \multicolumn{3}{c}{After training} \\
		\cmidrule{2-4} \cmidrule{6-8}
		Init. Type & Loss & Grad. & Min. EV && Loss & Grad. & Min. EV\\ \midrule
        Default  &  4.5963 & 0.5752 & -1.5549 && 0.0061 & 0.0074 &  0.0007\\
        Lemma \ref{ConstructiveLemma} &  1.5702 & 0.0992 & 0.03125 && 1.5699 & 0.0414 &  0.0156\\ 
        Bias+20 &  31.204 & 343.99 & -1.7421 && 2.3301 & 0.0090 & 0.0005\\            
        Bias $\in \mathcal{U}(-50,50) $&  51.445 & 378.36 & -430.49 && 2.3153 & 0.0048 & 0.0000\\     
        Bias $\in \mathcal{U}(-10,10)$ NM & 12.209 & 42.454 & -47.733 && 0.2198 &  0.0564 & 0.0013\\
		\bottomrule
	\end{tabular}
\end{table*}

\begin{remark}[Sharpness of sub-optimal local optima]
An interesting additional property of minima found using the previously discussed initializations is that they are ``sharp''.
Proponents of the sharp-flat hypothesis for generalization have found that minimizers with poor generalization live in sharp attracting basins with low volume and thus low probability in parameter space \citep{keskar_large-batch_2016,huang2019understanding}, although care has to be taken to correctly measure sharpness \citep{dinh_sharp_2017}.  Accordingly, we find that the maximum eigenvalue of the Hessian at each suboptimal local minimum is significantly higher than those at near-global minima.  For example, the maximum eigenvalue of the initialization by Lemma 1 in Table \ref{tab:localopt} is estimated as $113,598.85$ after training, whereas that of the default initialization is only around $24.01$. 
While our analysis has focused on sub-par local optima in training instead of global minima with sub-par generalization, both the scarcity of local optima during normal training and the favorable generalization properties of neural networks seem to correlate with their sharpness. 
\end{remark}

\par In light of our finding that neural networks trained with unconventional initialization reach suboptimal local minima, we conclude that poor local minima can readily be found with a poor choice of hyperparameters.  Suboptimal minima are less scarce than previously believed, and neural networks avoid these because good initializations and stochastic optimizers have been fine-tuned over time.  Fortunately, promising theoretical directions may explain good optimization performance while remaining compatible with empirical observations.  The approach followed by \citet{du_gradient_2019} analyzes the loss trajectory of SGD, showing that it avoids bad minima.  While this work assumes (unrealistically) large network widths,  this theoretical direction is compatible with empirical studies, such as \citet{goodfellow_qualitatively_2014}, showing that the training trajectory of realistic deep networks does not encounter significant local minima.  

\section{Weight decay: Are small $\ell_2$-norm solutions better?}

\par Classical learning theory advocates regularization for linear models, such as SVM and linear regression. For SVM, $\ell_2$ regularization endows linear classifiers with a wide-margin property \citep{cortes1995support}, and recent work on neural networks has shown that minimum norm neural network interpolators benefit from over-parametrization \citep{hastie2019surprises}
.  Following the long history of explicit parameter norm regularization for linear models, weight decay is used for training nearly all high performance neural networks \citep{he_deep_2015, chollet_xception_2016, huang2017densely, sandler2018mobilenetv2}.

 In combination with weight decay, all of these cutting-edge architectures also employ batch normalization after convolutional layers \citep{ioffe2015batch}.
With that in mind, \citet{van_laarhoven_l2_2017} shows that the regularizing effect of weight decay is counteracted by batch normalization, which removes the effect of shrinking weight matrices.  \citet{zhang_three_2018} argue that the synergistic interaction between weight decay and batch norm arises because weight decay plays a large role in regulating the effective learning rate of networks, since scaling down the weights of convolutional layers amplifies the effect of each optimization step, effectively increasing the learning rate.  Thus, weight decay increases the effective learning rate as the regularizer drags the parameters closer and closer towards the origin.  The authors also suggest that data augmentation and carefully chosen learning rate schedules are more powerful than explicit regularizers like weight decay. 

\par Other work echos this sentiment and claims that weight decay and dropout have little effect on performance, especially when using data augmentation \citep{hernandez-garcia_deep_2018}.  \citet{hoffer_norm_2018} further study the relationship between weight decay and batch normalization, and they develop normalization with respect to other norms.  \citet{shah2018minimum} instead suggest that minimum norm solutions may not generalize well in the over-parametrized setting.

  We find that the difference between performance of standard network architectures with and without weight decay is often statistically significant, even with a high level of data augmentation, for example, horizontal flips and random crops on CIFAR-10 (see Tables \ref{norm-bias} and \ref{norm-bias-normalized}).  But is weight decay the most effective form of $\ell_2$ regularization?  Furthermore, is the positive effect of weight decay because the regularizer promotes small norm solutions? We generalize weight decay by biasing the $\ell_2$ norm of the weight vector towards other values using the following regularizer, which we call \emph{norm-bias}:
\begin{equation}
    R_\mu(\phi) = \left| \left(\sum_{i=1}^P \phi_i^2\right) - \mu^2 \right|.
\end{equation}
\par $R_{0}$ is equivalent to weight decay, but we find that we can further improve performance by biasing the weights towards higher norms (see Tables \ref{norm-bias} and \ref{norm-bias-normalized}).  In our experiments on CIFAR-10 and CIFAR-100, networks are trained using weight decay coefficients from their respective original papers.  ResNet-18 and DenseNet are trained with $\mu^2=2500$ and norm-bias coefficient $0.005$, and MobileNetV2 is trained with $\mu^2 = 5000$ and norm-bias coefficient $0.001$.  $\mu$ is chosen heuristically by first training a model with weight decay, recording the norm of the resulting parameter vector, and setting $\mu$ to be slightly higher than that norm in order to avoid norm-bias leading to a lower parameter norm than weight decay.  While we find that weight decay improves results over a non-regularized baseline for all three models, we also find that models trained with large norm bias (i.e., large $\mu$) outperform models trained with weight decay.

\par These results lend weight to the argument that explicit parameter norm regularization is in fact useful for training networks, even deep CNNs with batch normalization and data augmentation.  However, the fact that norm-biased networks can outperform networks trained with weight decay suggests that any benefits of weight decay are unlikely to originate from the superiority of small-norm solutions.

To further investigate the effect of weight decay and parameter norm  on generalization, we also consider models without batch norm.  In this case, weight decay directly penalizes the norm of the linear operators inside a network, since there are no batch norm coefficients to compensate for the effect of shrinking weights. Our goal is to determine whether small-norm solutions are superior in this setting where the norm of the parameter vector is more meaningful.

In our first experiment without batch norm, we experience improved performance training an MLP with \emph{norm-bias} (see Table \ref{norm-bias-normalized}). In a state-of-the-art setting, we consider ResNet-20 with Fixup initialization, a ResNet variant that removes batch norm and instead uses a sophisticated initialization that solves the exploding gradient problem \citep{zhang_fixup_2019}.  We observe that weight decay substantially improves training over SGD with no explicit regularization --- in fact, ResNets with this initialization scheme train quite poorly without explicit regularization and data normalization.  Still, we find that \emph{norm-bias} with $\mu^2 = 1000$ and norm-bias coefficient $0.0005$ achieves better results than weight decay (see Table \ref{norm-bias-normalized}).  This once again refutes the theory that small-norm parameters generalize better and brings into doubt any relationship between classical Tikhonov regularization and weight decay in neural networks.  See Appendix \ref{low-norm-appendix} for a discussion concerning the final parameter norms of Fixup networks as well as additional experiments on CIFAR-100, a harder image classification dataset.

\begin{table}[h!]
\begin{centering}
\caption{ResNet-18, DenseNet-40, and MobileNetV2 models trained on non-normalized CIFAR-10 data with various regularizers.  Numerical entries are given by $\overline{m} (\pm s)$, where $\overline{m}$ is the average accuracy over $10$ runs, and $s$ represents standard error.}
\begin{tabular}{r|c|c|c}
\
Model & No weight decay ($\%$) & Weight decay ($\%$) & Norm-bias ($\%$) \\ \hline
ResNet & $93.46$ ($\pm 0.05$) & $94.06$ ($\pm 0.07$) & $\mathbf{94.86}$ ($\pm 0.05$) \\ 
DenseNet & $89.26$ ($\pm 0.08$) & $92.27$ ($\pm 0.06$) & $\mathbf{92.49}$ ($\pm 0.06$) \\ 
MobileNetV2 & $92.88$ ($\pm 0.06$) & $92.88$ ($\pm 0.09$)& $\mathbf{93.50}$ ($\pm 0.09$)\\ 
\end{tabular}

\label{norm-bias}
\end{centering}
\end{table}

\begin{table}[h!]
\begin{centering}
\caption{ResNet-18, DenseNet-40, MobileNetV2, ResNet-20 with Fixup initialization, and a 4-layer multi-layer perceptron (MLP) trained on normalized CIFAR-10 data with various regularizers.  Numerical entries are given by $\overline{m} (\pm s)$, where $\overline{m}$ is the average accuracy over $10$ runs, and $s$ represents standard error.}
\begin{tabular}{r|c|c|c}
Model & No weight decay ($\%$) & Weight decay ($\%$) & Norm-bias ($\%$) \\ \hline
ResNet & $93.40$ ($\pm 0.04$) & $94.76$ ($\pm 0.03$) & $\mathbf{94.99}$ ($\pm 0.05$) \\ 
DenseNet & $90.78$ ($\pm 0.08$) & $92.26$ ($\pm 0.06$)& $\mathbf{92.46}$ ($\pm 0.04$)\\ 
MobileNetV2 & $92.84$ ($\pm 0.05$) & $\mathbf{93.64}$ ($\pm 0.05$)& $\mathbf{93.64}$ ($\pm 0.03$)\\ 
ResNet Fixup & $10.00$ ($\pm 0.00$) & $91.42$ ($\pm 0.04$)& $\mathbf{91.55}$ ($\pm 0.07$)\\ 
MLP & $58.88$ ($\pm 0.10$) & $58.95$ ($\pm 0.07$)& $\mathbf{59.13}$ ($\pm 0.09$)\\
\end{tabular}

\label{norm-bias-normalized}
\end{centering}
\end{table}

\section{Kernel theory and the infinite-width limit}

In light of the recent surge of works discussing the properties of neural networks in the infinite-width limit, in particular, connections between infinite-width deep neural networks and Gaussian processes, see \citet{lee_deep_2017}, several interesting theoretical works have appeared. The wide network limit and Gaussian process interpretations have inspired work on the neural tangent kernel \citep{jacot_neural_2018}, while \citet{lee_wide_2019} and \citet{bietti_kernel_2018} have used wide network assumptions to analyze the training dynamics of deep networks. The connection of deep neural networks to kernel-based learning theory seems promising, but how closely do current architectures match the predictions made for simple networks in the large-width limit?

We focus on the Neural Tangent Kernel (NTK), developed in \citet{jacot_neural_2018}. Theory dictates that, in the wide-network limit, the neural tangent kernel remains nearly constant as a network trains.  Furthermore, neural network training dynamics can be described as gradient descent on a convex functional, provided the NTK remains nearly constant during training \citep{lee_wide_2019}.   In this section, we experimentally test the validity of these theoretical assumptions.

Fixing a network architecture, we use $\mathcal{F}$ to denote the function space parametrized by $\phi \in \R^p$. For the mapping $F: \R^P \to \mathcal{F}$, the NTK is defined by
\begin{equation}
    \Phi(\phi) = \sum_{p=1}^P \partial_{\phi_{p}} F(\phi) \otimes \partial_{\phi_{p}} F(\phi),
\end{equation}
where the derivatives $\partial_{\phi_{p}} F(\phi)$ are evaluated at a particular choice of $\phi$ describing a neural network.  
The NTK can be thought of as a similarity measure between images; given any two images as input, the NTK returns an $n\times n$ matrix, where $n$ is the dimensionality of the feature embedding of the neural network.  We sample entries from the NTK by drawing a set of $N$ images $\lbrace x_i\rbrace$ from a dataset, and computing the entries in the NTK corresponding to all pairs of images in our image set. We do this for a random neural network $f:\R^{m} \to \R^n$ and computing the tensor $\Phi(\phi) \in R^{N \times N \times n \times n}$ of all pairwise realizations, restricted to the given data:
\begin{equation}
    \Phi(\phi)_{ijkl} = \sum_{p=1}^P \partial_{\phi_{p}} f(\mathbf{x}_i, \phi)_k \cdot \partial_{\phi_{p}} f(\mathbf{x}_j, \phi)_l
    \label{ntk_sample}
\end{equation}
By evaluating \Eqref{ntk_sample} using automatic differentiation, we compute slices from the NTK before and after training for a large range of architectures and network widths. We consider image classification on CIFAR-10 and compare a two-layer MLP, a four-layer MLP, a simple 5-layer ConvNet, and a ResNet. We draw 25 random images from CIFAR-10 to sample the NTK before and after training. We measure the change in the NTK by computing the correlation coefficient of the (vectorized) NTK before and after training. We do this for many network widths, and see what happens in the wide network limit. For MLPs we increase the width of the hidden layers, for the ConvNet (6-Layer, Convolutions, ReLU, MaxPooling), we increase the number of convolutional filters, for the ResNet we consider the WideResnet \citep{zagoruyko_wide_2016} architecture, where we increase its width parameter. We initialize all models with uniform He initialization as discussed in \citet{he_delving_2015}, departing from specific Gaussian initializations in theoretical works to analyze the effects for modern architectures and methodologies.

\begin{figure}[h]
  \begin{subfigure}[b]{0.5\textwidth}
    \includegraphics[width=\textwidth]{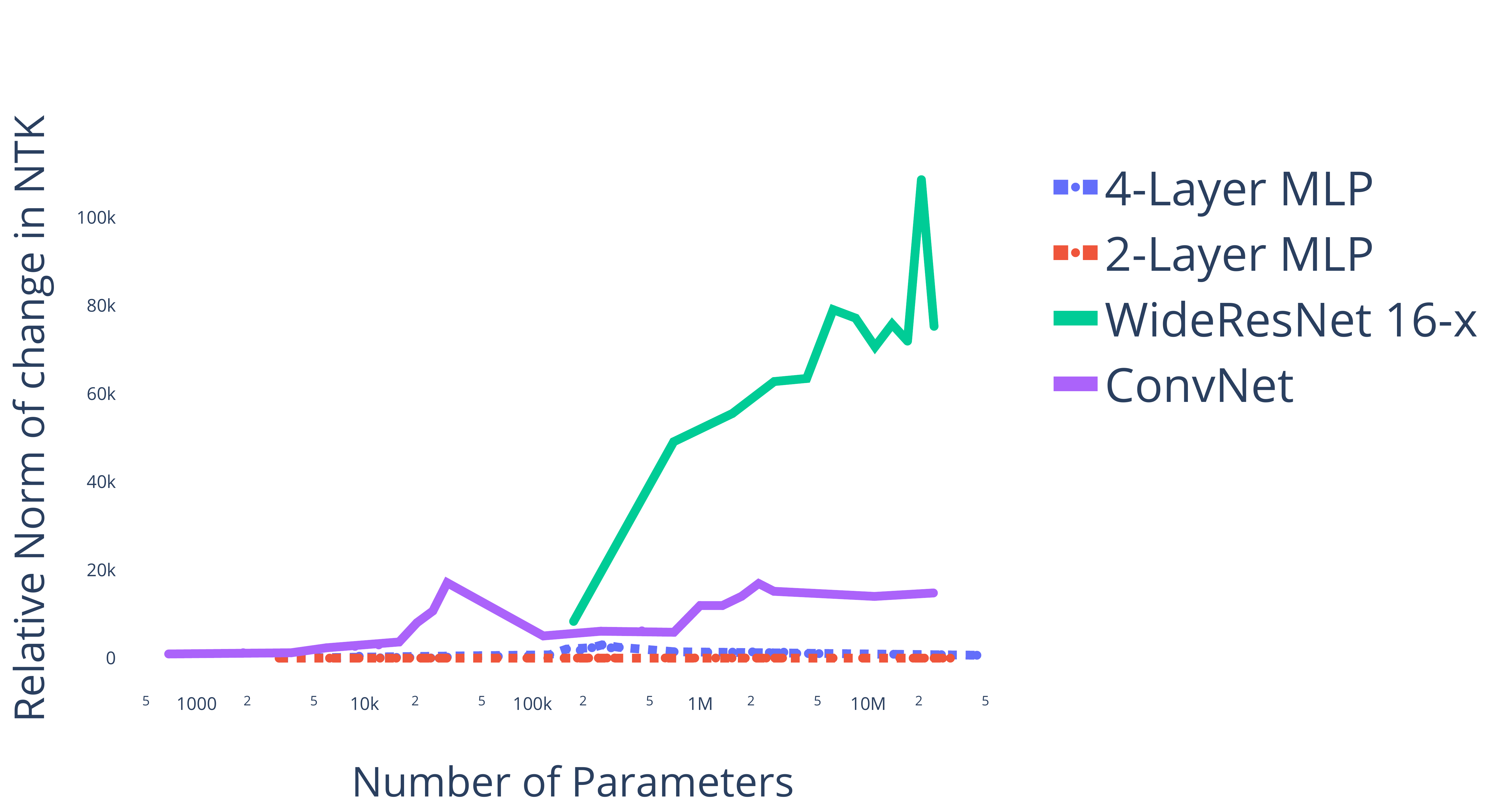}
    \caption{}
    \label{fig:ntk_a}
  \end{subfigure}
  \begin{subfigure}[b]{0.5\textwidth}
    \includegraphics[width=\textwidth]{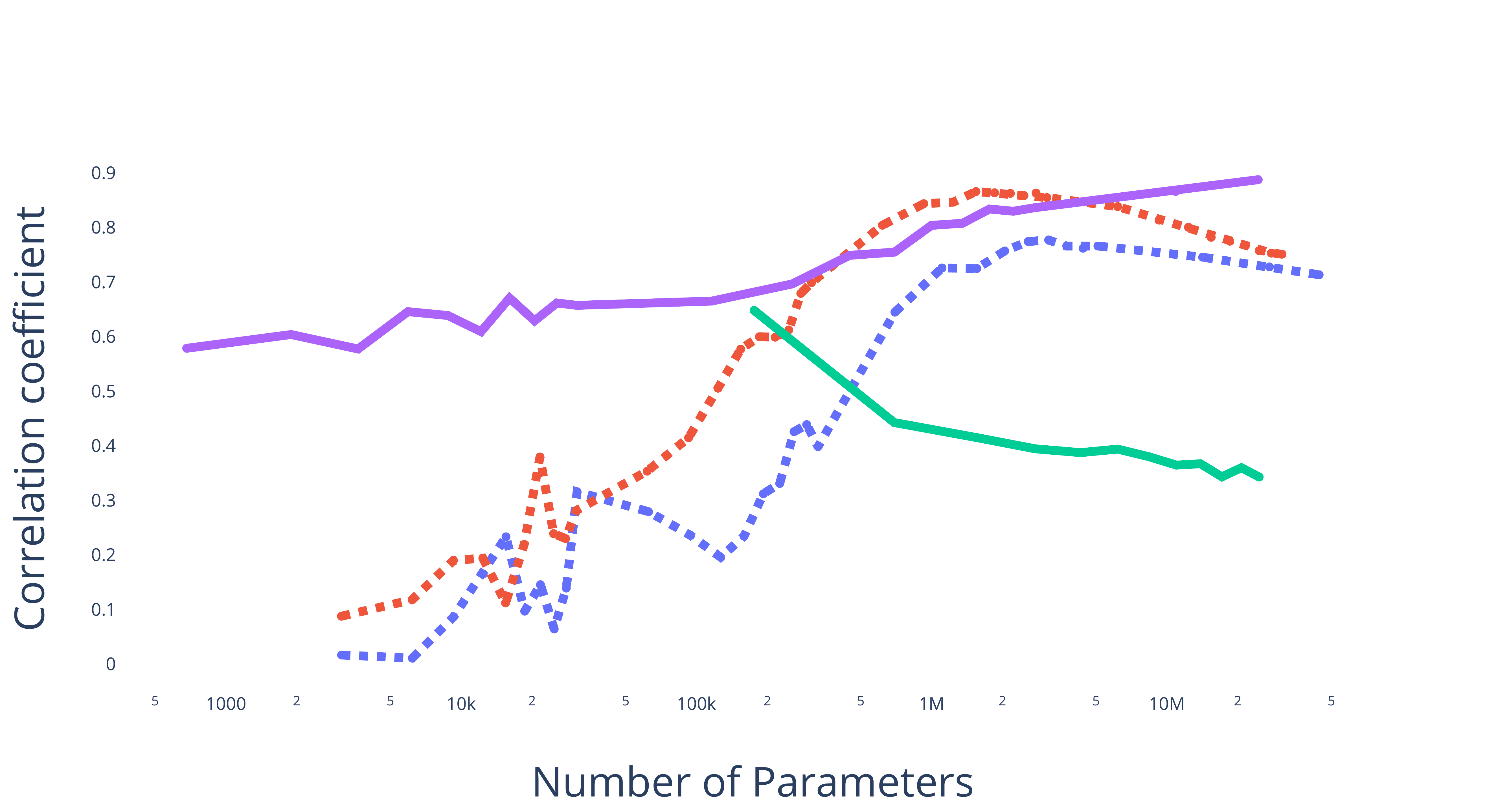}
    \caption{}
    \label{fig:ntk_b}
  \end{subfigure}
  \begin{subfigure}[b]{0.5\textwidth}
     \includegraphics[width=\textwidth]{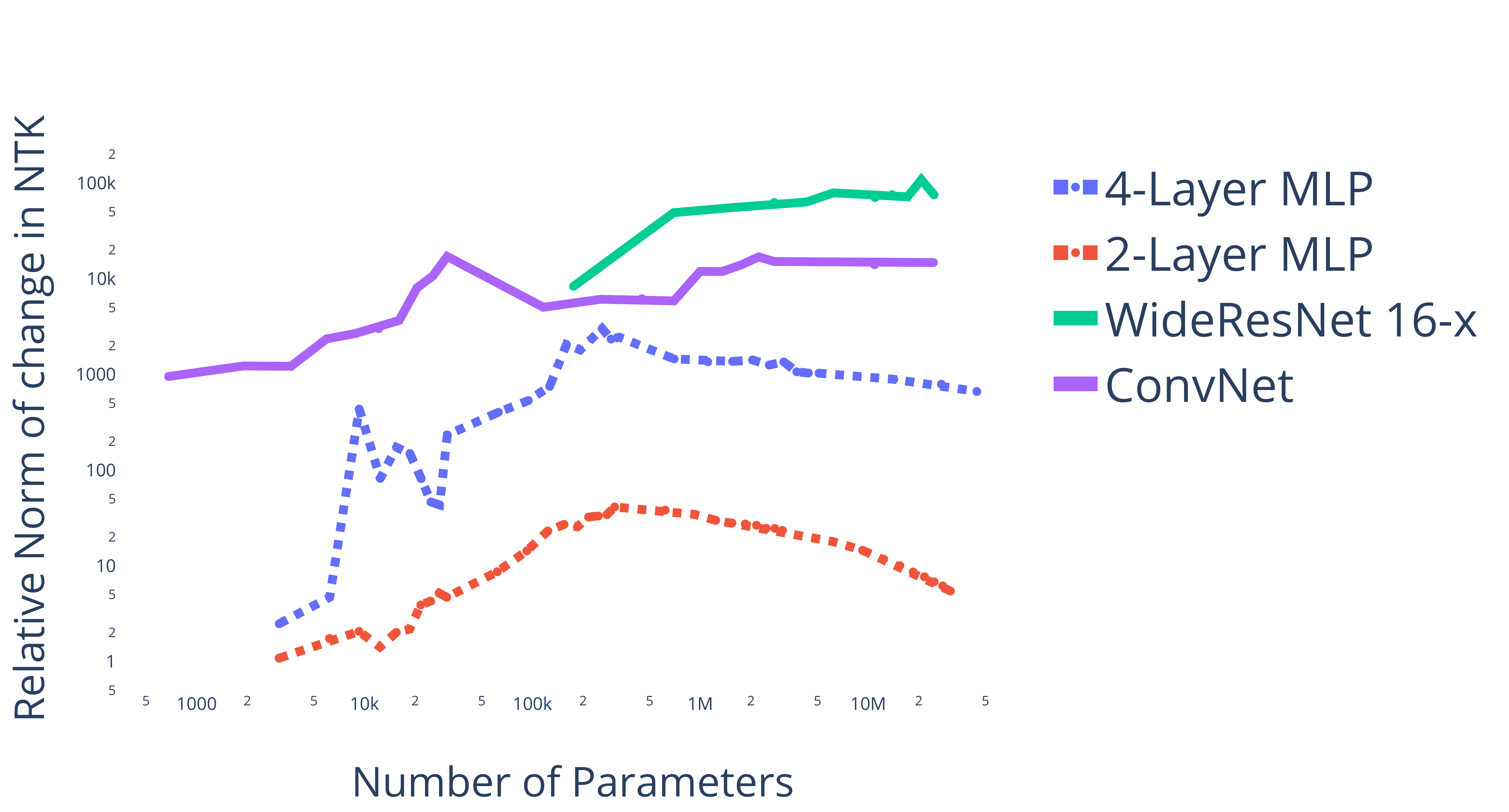}
    \caption{}
    \label{fig:ntk_c}
  \end{subfigure}
  \begin{subfigure}[b]{0.5\textwidth}
    \includegraphics[width=\textwidth]{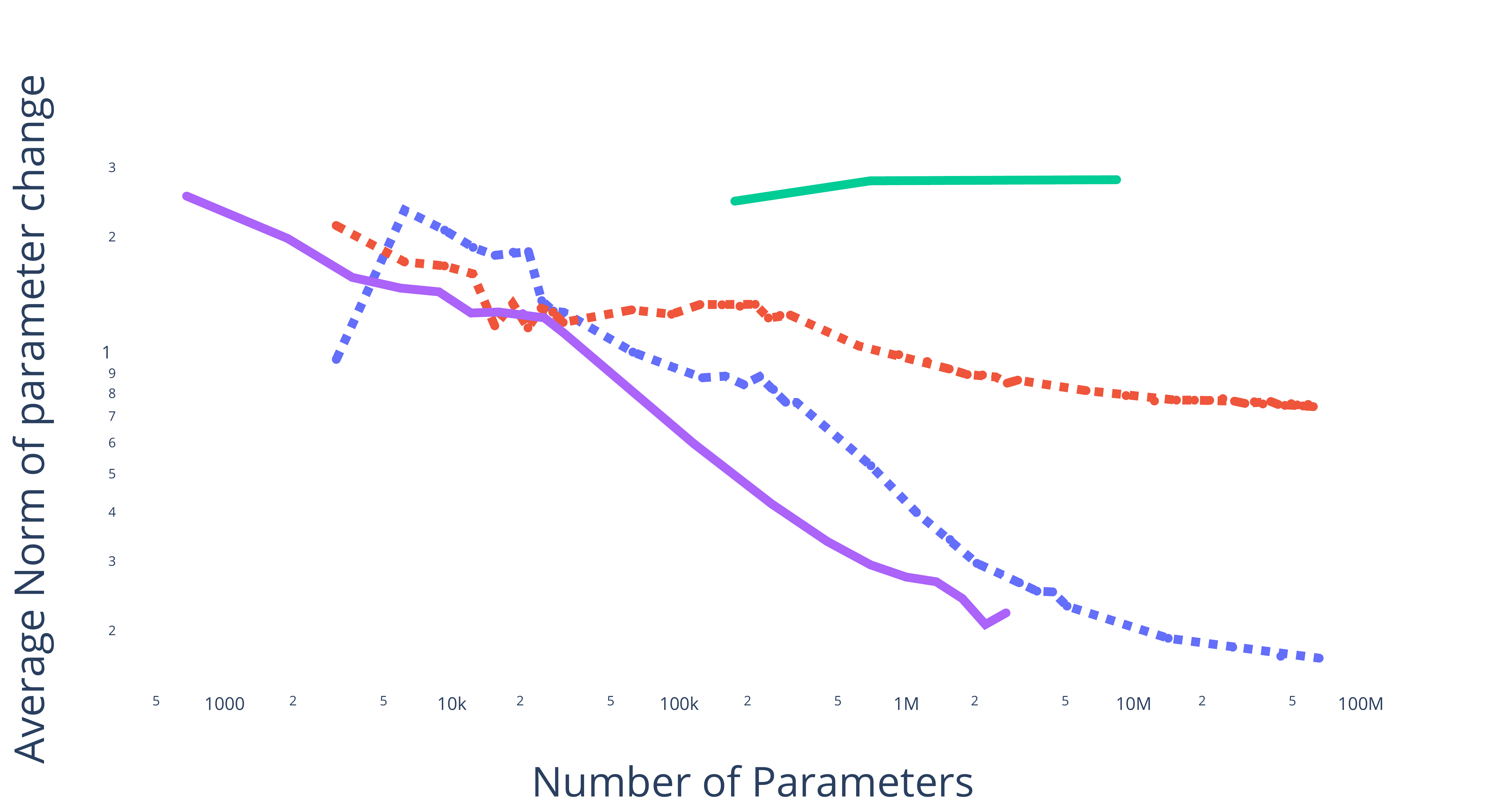}
    \caption{}
    \label{fig:ntk_d}
  \end{subfigure}
  \caption{(a) The relative norm of the neural tangent kernel as a function of the number of parameters is shown for several networks. This figure highlights the difference between the behavior of ResNets and other architectures.  Figure \ref{fig:ntk_c} visualizes the same data in a logarithmic scale. (b) The correlation of the neural tangent kernel before and after training. We expect this coefficient to converge toward 1 in the infinite-width limit for multi-layer networks as in \citet{jacot_neural_2018}. We do not observe this trend for ResNets as is clear from the curve corresponding to the WideResNet. (d) The average norm of parameter change decreases for simple architectures but stays nearly constant for the WideResNet.
  \label{fig:ntk_sampling}}

\end{figure}

The results are visualized in Figure \ref{fig:ntk_sampling}, where we plot parameters of the NTK for these different architectures, showing how the number of parameters impacts the relative change in the NTK ($||\Phi_1 - \Phi_0|| / ||\Phi_0||$, where $\Phi_0$/$\Phi_1$ denotes the sub-sampled NTK before/after training) and correlation coefficient ($\operatorname{Cov}(\Phi_1, \Phi_0) /\sigma(\Phi_1)/\sigma(\Phi_0)$). \citet{jacot_neural_2018} predicts that the NTK should change very little during training in the infinite-width limit. 

At first glance, it might seem that these expectations are hardly met for our (non-infinite) experiments. Figure \ref{fig:ntk_a} and Figure \ref{fig:ntk_c} show that the relative change in the NTK during training (and also the magnitude of the NTK) is rapidly increasing with width and remains large in magnitude for a whole range of widths of convolutional architectures. The MLP architectures do show a trend toward small changes in the NTK, yet convergence to zero is slower in the 4-Layer case than in the 2-Layer case. 

However, a closer look shows that almost all of the relative change in the NTK seen in Figure \ref{fig:ntk_c} is explained by a simple linear re-scaling of the NTK.  It should be noted that the scaling of the NTK is strongly effected by the magnitude of parameters at initialization.  Within the NTK theory of \citet{lee_deep_2017}, a linear rescaling of the NTK during training corresponds simply to a change in learning rate, and so it makes more sense to measure similarity using a scale-invariant metric.  

Measuring similarity between sub-sampled NTKs using the scale-invariant correlation coefficient, as in Figure \ref{fig:ntk_b}, is more promising.  Surprisingly, we find that, as predicted in \citet{jacot_neural_2018}, the NTK changes very little (beyond a linear rescaling) for the wide ConvNet architectures. For the dense networks, the predicted trend toward small changes in the NTK also holds for most of the evaluated widths, although there is a dropoff at the end which may be an artifact of the difficulty of training these wide networks on CIFAR-10. For the Wide Residual Neural Networks, however, the general trend toward higher correlation in the wide network limit is completely reversed. The correlation coefficient decreases as network width increases, suggesting that the neural tangent kernel at initialization and after training becomes qualitatively more different as network width increases. The reversal of the correlation trend seems to be a property which emerges from the interaction of batch normalization and skip connections. Removing either of these features from the architecture leads to networks which have an almost constant correlation coefficient for a wide range of network widths, see Figure \ref{fig:nskip} in the appendix, calling for the consideration of both properties in new formulations of the NTK.

In conclusion, we see that although the NTK trends towards stability as the width of simple architectures increases, the opposite holds for the highly performant Wide ResNet architecture. Even further, neither the removal of batch normalization or the removal of skip connections fully recover the positive NTK trend. While we have hope that kernel-based theories of neural networks may yield guarantees for realistic (albeit wide) models in the future, current results do not sufficiently describe state-of-the-art architectures.  Moreover, the already good behavior of models with unstable NTKs is an indicator that good optimization and generalization behaviors do not fundamentally hinge on the stability of the NTK.

\section{Rank:  Do networks with low-rank layers generalize better?}
State-of-the-art neural networks are highly over-parameterized, and their large number of parameters is a problem both for learning theory and for practical use.  
In the theoretical setting, rank has been used to tighten bounds on the generalization gap of neural networks.  Generalization bounds from \citet{vc-bound} are improved under conditions of low rank and high sparsity \citep{PACbayes} of parameter matrices, and the compressibility of low-rank matrices (and other low-dimensional structure) can be directly exploited to provide even stronger bounds \citep{arora2018stronger}.
  Further studies show a tendency of stochastic gradient methods to find low-rank solutions \citep{layer_alignment}.    
The tendency of SGD to find low-rank operators, in conjunction with results showing generalization bounds for low-rank operators, might suggest that the low-rank nature of these operators is important for generalization.

\par \citet{langenberg2019} claim that low-rank networks, in addition to generalizing well to test data, are more robust to adversarial attacks.
Theoretical and empirical results from the aforementioned paper lead the authors to make two major claims. First, the authors claim that networks which undergo adversarial training have low-rank and sparse matrices.  Second, they claim that networks with low-rank and sparse parameter matrices are more robust to adversarial attacks. We find in our experiments that neither claim holds up in practical settings, including ResNet-18 models trained on CIFAR-10.

We test the generalization and robustness properties of neural networks with low-rank and high-rank operators by promoting low-rank or high-rank parameter matrices in late epochs. We employ the regularizer introduced in \citet{sedghi2018singular} to create the protocols RankMin, to find low-rank parameters, and RankMax, to find high-rank parameters. RankMin involves fine-tuning a pre-trained model by replacing linear operators with their low-rank approximations, retraining, and repeating this process. Similarly, RankMax involves fine-tuning a pre-trained model by clipping singular values from the SVD of parameter matrices in order to find high-rank approximations. We are able to manipulate the rank of matrices without strongly affecting the performance of the network. We use both natural training and 7-step projected gradient descent (PGD) adversarial training routines \citep{madry2017towards}. The goal of the experiment is to observe how the rank of weight matrices impacts generalization and robustness. We start by attacking naturally trained models with the standard PGD adversarial attack with $\epsilon = 8/255$. Then, we move to the adversarial training setting and test the effect of manipulating rank on generalization and on robustness.

In order to compare our results with \citet{langenberg2019}, we borrow the notion of effective rank, denoted by $r(W)$ for some matrix $W$.
This continuous relaxation of rank is defined as follows.  $r(W) = \frac{\|W\|_*}{\|W\|_F}$
where $\|\cdot\|_*$, $\|\cdot\|_1$, and $\|\cdot\|_F$ are the nuclear norm, the 1-norm, and the Frobenius norm, respectively.  Note that the singular values of convolution operators can be found quickly with a method from \citet{sedghi2018singular}, and that method is used here.  

\begin{table}
\centering
    \begin{tabular}{P{20mm}|P{20mm}|P{20mm}|P{20mm}|P{20mm}}
         \multicolumn{5}{c}{}\\
        Model & Training method  & Clean Test \newline Accuracy (\%) & Robust (\%) \newline $\epsilon = 8/255$ & Robust (\%) \newline$\epsilon = 1/255$\\
        \hline
        \hline
        ResNet-18 & Natural & 94.66 & 0.00 & 31.98\\
        & RankMax & 93.66 & 0.00 &  22.01 \\
        & RankMin & 94.44 & 0.00 & 31.53 \\
        \cline{2-5}
        & Adversarial & 79.37 & 35.38 & 74.27\\
        & RankMaxAdv & 80.00 & 35.55 & 74.92\\
        & RankMinAdv & 78.34 & 33.68 & 73.19 \\
         \hline
        ResNet-18 & Natural & 92.95 & 0.01 & 31.34\\
        w/o skips & RankMax  & 91.71 & 0.00 & 18.81 \\
        & RankMin  & 92.42 & 0.00 & 30.37 \\
        \cline{2-5}
        & Adversarial& 79.57 & 35.95 & 74.88\\
        & RankMaxAdv & 79.43 & 36.45 & 74.87\\
        & RankMinAdv & 78.52 & 33.97 & 73.64 \\
         \multicolumn{5}{c}{}\\
    \end{tabular}
\caption{Result presented here are from experiments with CIFAR-10 data and two of the architectures we studied. Robust accuracy is measured with 20-step PGD attacks with the $\epsilon$ values specified at the top of the column.}
\label{tab:rank}
\end{table}

\begin{figure}
  \begin{subfigure}[b]{0.5\textwidth}
    \includegraphics[width=\textwidth]{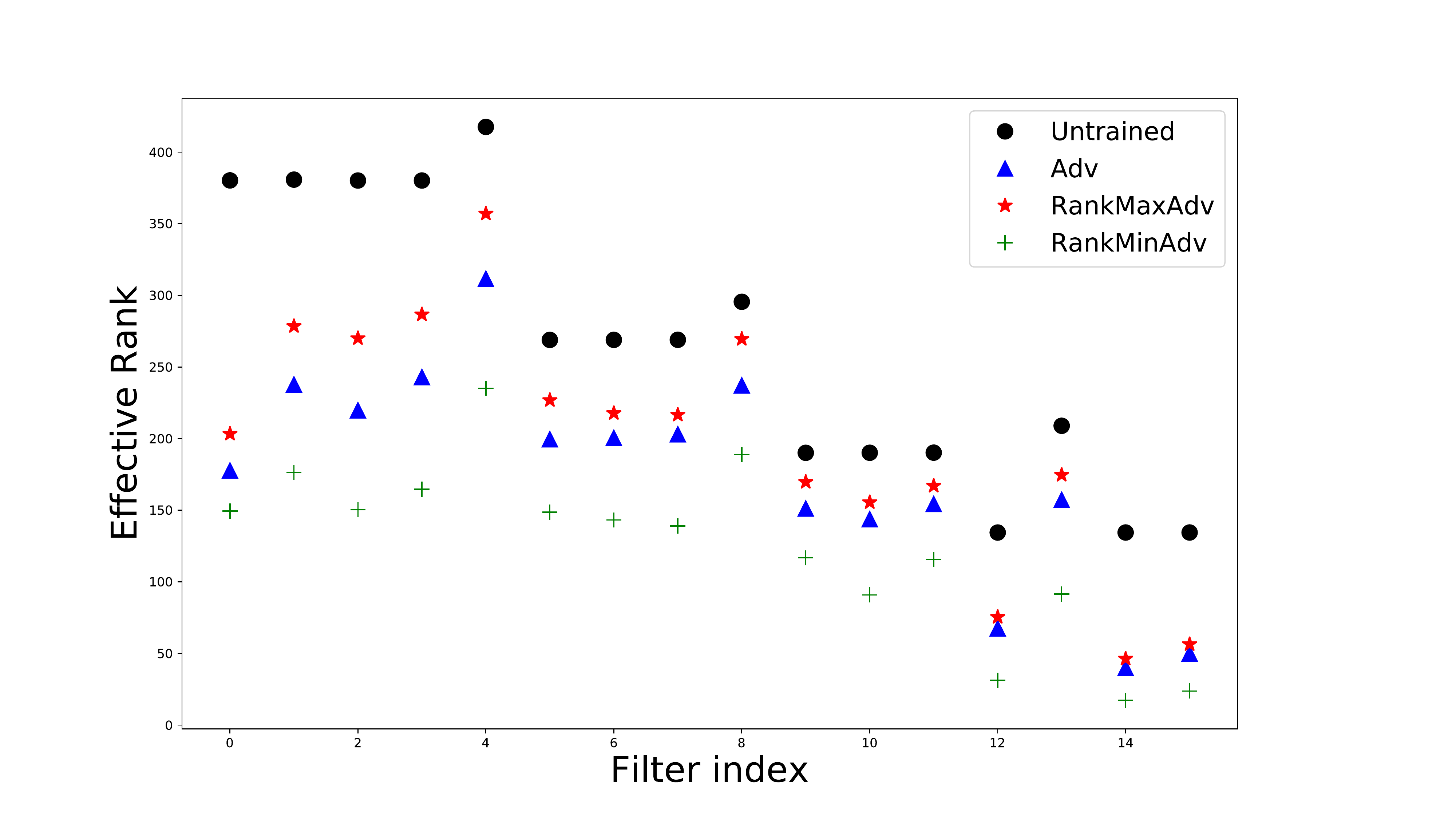}
    \caption{Effective rank of naturally trained models.}
    \label{fig:1}
  \end{subfigure}
  \begin{subfigure}[b]{0.5\textwidth}
    \includegraphics[width=\textwidth]{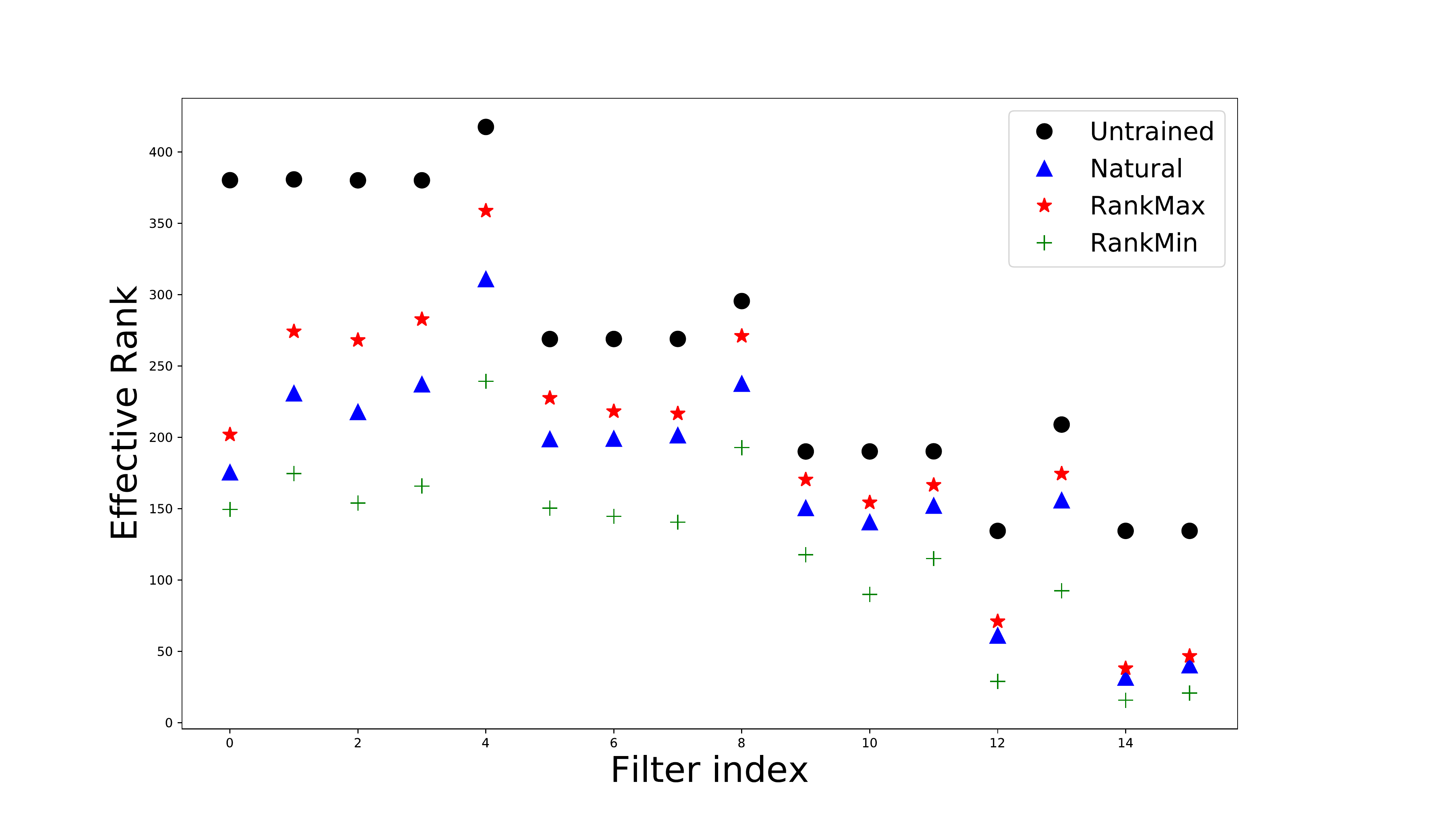}
    \caption{Effective rank of adversarially trained models.}
    \label{fig:2}
  \end{subfigure}
  \caption{This plot shows the effective rank of each filter for the ResNet-18 models. The filters are indexed on the $x$-axis, so moving to the right is like moving through the layers of the network. Our routines designed to manipulate the rank have exactly the desired effect as shown here.}
\label{rank_plot}
\end{figure}

In our experiments we investigate two architectures, ResNet-18 and ResNet-18 without skip connections. We train on CIFAR-10 and CIFAR-100, both naturally and adversarially. Table \ref{tab:rank} shows that RankMin and RankMax achieve similar generalization on CIFAR-10. More importantly, when adversarially training, a setting when robustness is undeniably the goal, we see the RankMax outperforms both RankMin \emph{and} standard adversarial training in robust accuracy. Figure \ref{rank_plot} confirms that these two training routines do, in fact, control effective rank. Experiments with CIFAR-100 yield similar results and are presented in Appendix \ref{appendix:rankMinMax}. It is clear that increasing rank using an analogue of rank minimizing algorithms does not harm performance. Moreover, we observe that adversarial robustness does not imply low-rank operators, nor do low-rank operators imply robustness. The findings in \citet{layer_alignment} are corroborated here as the black dots in Figures \ref{rank_plot} show that initializations are higher in rank than the trained models. Our investigation into what useful intuition in practical cases can be gained from the theoretical work on the rank of CNNs and from the claims about adversarial robustness reveals that rank plays little to no role in the performance of CNNs in the practical setting of image classification.

\section{Conclusion}

This work highlights the gap between deep learning theory and observations in the real-world setting.  We underscore the need to carefully examine the assumptions of theory and to move past the study of toy models, such as deep linear networks or single-layer MLPs, whose traits do not describe those of the practical realm.  First, we show that realistic neural networks on realistic learning problems contain suboptimal local minima.  Second, we show that low-norm parameters may not be optimal for neural networks, and in fact, biasing parameters to a non-zero norm during training improves performance on several popular datasets and a wide range of networks.  Third, we show that the wide-network trends in the neural tangent kernel do not hold for ResNets and that the interaction between skip connections and batch normalization play a large role.  Finally, we show that low-rank linear operators and robustness are not correlated, especially for adversarially trained models.

\section*{Acknowledgments}
This work was supported by the AFOSR MURI Program, the National Science Foundation DMS directorate, and also the DARPA YFA and L2M programs. Additional funding was provided by the Sloan Foundation.

\bibliography{iclr2020_conference}
\bibliographystyle{iclr2020_conference}

\appendix
\section{Appendix}
\subsection{Proof of Lemma 1}
\label{proof:lem1}
\setcounter{lemma}{0}
\begin{lemma}
Consider a family of L-layer multilayer perceptrons with ReLU activations $\{F_\phi: \mathbb{R}^m \to \mathbb{R}^n \}$ and let $s = \min_i n_i$ be the minimum layer width. Then this family has rank-$s$ affine expression. 
\end{lemma}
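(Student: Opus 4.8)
The plan is to exploit the fact that a ReLU network restricted to a finite input set behaves as an affine map once every hidden pre-activation has been forced to be positive on that set; the problem then reduces to (i) realizing the target matrix $A$ as a product of the weight matrices through the network's bottleneck width, and (ii) a bookkeeping argument for the biases.

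First I would reduce to the linear part. Write $G(\mathbf{x}) = A\mathbf{x} + \mathbf{b}$ with $\mathrm{rank}(A) = s$, and let $j\in\{0,\dots,L\}$ be an index attaining the minimum width, $n_j = s$. Taking a full-rank factorization of $A$ (for instance from the compact SVD $A = U\Sigma V^{\!\top}$ with $\Sigma\in\mathbb{R}^{s\times s}$), split $A = BC$ with $C\in\mathbb{R}^{s\times m}$ and $B\in\mathbb{R}^{n\times s}$. I would then choose the weight matrices so that $A_j A_{j-1}\cdots A_1 = C$ and $A_L A_{L-1}\cdots A_{j+1} = B$. Since every internal dimension $n_i$ is at least $s$, this is elementary linear algebra: apply $C$ in the first layer, have the intermediate layers carry an $s$-dimensional coordinate subspace forward via blocks of the form $\bigl(\begin{smallmatrix} I_s & 0\\ 0 & 0\end{smallmatrix}\bigr)$ of the appropriate shape, and apply $B$ in the last layer; the degenerate cases $j=0$ and $j=L$ (and $L=1$) are subsumed. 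With this choice, $A_L\cdots A_1 = BC = A$, and this is exactly the point at which the bottleneck width $s$ pins down the rank that can be represented.

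Next I would fix the biases so that $F_\phi$ agrees with an affine map on $\Omega$. Choose $\mathbf{b}_1,\dots,\mathbf{b}_{L-1}$ greedily: having fixed the earlier biases, the set of layer-$i$ inputs produced from $\Omega$ is finite, hence bounded, so a sufficiently large constant vector $\mathbf{b}_i$ makes $A_i(\cdot)+\mathbf{b}_i$ entrywise positive on all of them — this is the only place finiteness of $\Omega$ is used. Then every ReLU acts as the identity along these trajectories, so on $\Omega$ we have $F_\phi = H_L\circ\cdots\circ H_1$, an affine map with linear part $A_L\cdots A_1 = A$ and offset $\mathbf{b}_L + \mathbf{c}$, where $\mathbf{c} = \sum_{i=1}^{L-1} (A_L\cdots A_{i+1})\,\mathbf{b}_i$ is now determined. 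Setting $\mathbf{b}_L = \mathbf{b} - \mathbf{c}$ yields $H_L\circ\cdots\circ H_1 = G$, and hence $F_\phi(\mathbf{x}) = G(\mathbf{x})$ for every $\mathbf{x}\in\Omega$.

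I expect the main obstacle to be precisely the interaction between steps (i) and (ii): the large biases required for positivity propagate through the subsequent linear maps and shift the output offset, so one must make sure this shift is absorbed entirely into the final, ReLU-free bias $\mathbf{b}_L$ without disturbing either the positivity of the hidden layers or the linear part $A$ — which is why the offset correction is deferred to the last layer. The rank-factorization step is routine but deserves a clean statement of why internal width $\ge s$ is both necessary and sufficient to realize a rank-$s$ map.
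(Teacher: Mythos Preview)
Your proposal is correct and follows essentially the same approach as the paper: factor $A$ via its compact SVD, place the two factors in the first and last weight matrices, use zero-padded $s\times s$ identities in between, push biases large enough to keep all pre-activations positive on the finite set $\Omega$, and absorb the accumulated offset into the final (post-ReLU) bias $\mathbf{b}_L$. The only cosmetic differences are that the paper fixes a single global constant $c$ (and structures the intermediate biases as $[\mathbf{0},\,c\mathbf{1}]^\top$ so only the zeroed-out coordinates get bumped) whereas you choose biases greedily layer by layer; also, your introduction of the bottleneck index $j$ is unnecessary since your construction ends up placing $C$ and $B$ in the first and last layers anyway, exactly as the paper does.
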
 
\begin{proof}
Let $G$ be a rank-$s$ affine function, and $\Omega\subset \mathbb{R}^m$ be a finite set. Let $G(\mathbf{x}) = A\mathbf{x}+\mathbf{b}$ with $A = U \Sigma V$ being the singular value decomposition of $A$ with $U\in \mathbb{R}^{n\times s}$ and $V \in \mathbb{R}^{s \times m}$. 

We define 
$$A_1= \begin{bmatrix} \Sigma V \\ \mathbf{0} \end{bmatrix}$$
where $\mathbf{0}$ is a (possibly void) $(n_1 - s) \times m$ matrix of all zeros, and $b_1 =  c\mathbf{1}$ for $c= \max_{\mathbf{x}_i \in \Omega, 1\leq j\leq n_1}|(A_1 \mathbf{x}_i)_j| + 1$ and $\mathbf{1}\in \mathbb{R}^{n_1}$ being a vector of all ones.  
We further choose $A_l \in \mathbb{R}^{n_l \times n_{l-1}}$ to have an $s\times s$ identity matrix in the upper left, and fill all other entries with zeros. This choice is possible since $n_l\geq s$ for all $l$. We define $\mathbf{b}_l =  \begin{bmatrix}\mathbf{0} & c ~\mathbf{1}\end{bmatrix}^T \in \mathbb{R}^{n_l}$ where $\mathbf{0} \in \mathbb{R}^{1 \times s}$ is a vector of all zeros and $\mathbf{1} \in \mathbb{R}^{1 \times (n_l - s)}$ is a (possibly void) vector of all ones. 

Finally, we choose $A_L =  \begin{bmatrix} U & \mathbf{0} \end{bmatrix}$, where now $\mathbf{0}$ is a (possibly void) $n \times (n_{L-1} - s)$ matrix of all zeros, and $\mathbf{b}_L = -c A_L\mathbf{1} + \mathbf{b}$ for $\mathbf{1}\in \mathbb{R}^{n_{L-1}}$ being a vector of all ones. 

Then one readily checks that $F_\phi(\mathbf{x})=G(\mathbf{x})$ holds for all $x\in\Omega$. Note that all entries of all activations are greater or equal to $c>0$, such that no ReLU ever maps an entry to zero. 
\end{proof}

\subsection{Proof of Theorem 1}
\label{proof:thm1}
\setcounter{theorem}{0}
\begin{theorem}
Consider a training set, $\{(\mathbf{x}_i,y_i)\}_{i=1}^N$, a family $\{F_{\phi}\}$ of MLPs with $s = \min_i n_i$ being the smallest width. Consider the training of a rank-$s$ linear classifier $G_{A,\mathbf{b}}$, i.e., 
\begin{equation}
    \min_{A,\mathbf{b}} \mathcal{L}(G_{A,\mathbf{b}}; \{(\mathbf{x}_i,y_i)\}_{i=1}^N), \qquad \text{subject to } \text{rank}(A)\leq s,
\end{equation}
for any continuous loss function $\mathcal{L}$. Then for each local minimum, $(A', \mathbf{b}')$, of the above training problem, there exists a local minimum, $\phi'$, of $\mathcal{L}(F_{\phi}; \{(\mathbf{x}_i,y_i)\}_{i=1}^N)$ with the property that $F_{\phi'}(\mathbf{x}_i)=G_{A', \mathbf{b}'}(\mathbf{x}_i)$ for $i=1, 2, ..., N$.
\end{theorem}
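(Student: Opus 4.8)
The plan is to combine the constructive Lemma~\ref{ConstructiveLemma} with the observation that its construction is \emph{stable}: because every ReLU pre-activation on every training point is bounded away from zero, the network coincides with a single affine map on the training data not only at the constructed parameter, but on an entire neighbourhood of it in $\Phi$. Local minimality of the linear problem then transfers by pulling it back through this neighbourhood.

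First I would apply Lemma~\ref{ConstructiveLemma} with $\Omega=\{\mathbf{x}_1,\dots,\mathbf{x}_N\}$ and $G=G_{A',\mathbf{b}'}$; since $\operatorname{rank}(A')\le s$ the singular value decomposition construction goes through verbatim (padding with zero singular values if $\operatorname{rank}(A')<s$), producing $\phi'=(A_1,\mathbf{b}_1,\dots,A_L,\mathbf{b}_L)$ with $F_{\phi'}(\mathbf{x}_i)=G_{A',\mathbf{b}'}(\mathbf{x}_i)$ for all $i$ and with every hidden pre-activation of every $\mathbf{x}_i$ at least $c>0$. On the region where all earlier pre-activations of a fixed input are positive, each pre-activation depends continuously (indeed polynomially) on $\phi$, so an induction on layers yields an open ball $U\ni\phi'$ in $\Phi$ on which all pre-activations of all $\mathbf{x}_i$ stay strictly positive. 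On $U$ every ReLU acts as the identity along the forward pass of each $\mathbf{x}_i$, hence $F_\phi(\mathbf{x}_i)=\tilde A(\phi)\mathbf{x}_i+\tilde{\mathbf{b}}(\phi)$ with $\tilde A(\phi)=A_L(\phi)\cdots A_1(\phi)$ and $\tilde{\mathbf{b}}(\phi)$ the accumulated bias; both are polynomial, hence continuous, in $\phi$, and a direct check of the bias bookkeeping in Lemma~\ref{ConstructiveLemma} shows $(\tilde A(\phi'),\tilde{\mathbf{b}}(\phi'))=(A',\mathbf{b}')$ (the $c$-shifts telescope and cancel, and $A_L\cdots A_1=U\Sigma V=A'$).

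Next I would record two facts. (i) For every $\phi\in U$, $\operatorname{rank}(\tilde A(\phi))\le\min_l\operatorname{rank}(A_l(\phi))\le s$ (a composition of linear maps cannot have rank exceeding the narrowest intermediate dimension), so $(\tilde A(\phi),\tilde{\mathbf{b}}(\phi))$ is feasible for the rank-constrained linear problem. (ii) Since $\mathcal{L}(\,\cdot\,;\{(\mathbf{x}_i,y_i)\}_{i=1}^N)$ depends on its first argument only through the values at $\mathbf{x}_1,\dots,\mathbf{x}_N$, we have $\mathcal{L}(F_\phi;\cdot)=\mathcal{L}(G_{\tilde A(\phi),\tilde{\mathbf{b}}(\phi)};\cdot)$ for all $\phi\in U$, and likewise $\mathcal{L}(F_{\phi'};\cdot)=\mathcal{L}(G_{A',\mathbf{b}'};\cdot)$. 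Now invoke that $(A',\mathbf{b}')$ is a local minimum of the linear problem: there is $\varepsilon>0$ such that $\mathcal{L}(G_{A,\mathbf{b}};\cdot)\ge\mathcal{L}(G_{A',\mathbf{b}'};\cdot)$ whenever $\operatorname{rank}(A)\le s$ and $\|(A,\mathbf{b})-(A',\mathbf{b}')\|<\varepsilon$. By continuity of $\phi\mapsto(\tilde A(\phi),\tilde{\mathbf{b}}(\phi))$ and the equality at $\phi'$, shrink $U$ to $U'\ni\phi'$ with image inside this $\varepsilon$-ball; then for every $\phi\in U'$,
\begin{equation*}
\mathcal{L}(F_\phi;\cdot)=\mathcal{L}(G_{\tilde A(\phi),\tilde{\mathbf{b}}(\phi)};\cdot)\ \ge\ \mathcal{L}(G_{A',\mathbf{b}'};\cdot)=\mathcal{L}(F_{\phi'};\cdot),
\end{equation*}
so $\phi'$ is a local minimum of the MLP loss, with $F_{\phi'}(\mathbf{x}_i)=G_{A',\mathbf{b}'}(\mathbf{x}_i)$ for all $i$, as claimed.

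I expect the main obstacle to be the middle step: showing that the ``all ReLUs active'' set is genuinely open in parameter space and that the induced effective affine map $(\tilde A,\tilde{\mathbf{b}})$ simultaneously (a) equals $(A',\mathbf{b}')$ \emph{exactly} at $\phi'$ — which requires the bias choices of Lemma~\ref{ConstructiveLemma} to cancel globally, not merely on $\Omega$ — and (b) is continuous in $\phi$ while remaining rank-$\le s$. Once this is set up, the remaining pieces (the rank-of-a-product bound and the pull-back of local minimality) are routine. One assumption worth stating explicitly is that $\mathcal{L}$ sees a model only through its outputs on the training inputs, since this is precisely what licenses passing between $F_\phi$ and its affine surrogate.
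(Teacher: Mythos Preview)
Your proposal is correct and follows essentially the same route as the paper's proof: construct $\phi'$ via Lemma~\ref{ConstructiveLemma}, use strict positivity of the pre-activations to obtain an open neighbourhood on which the network acts affinely on the training set with effective parameters $(\tilde A(\phi),\tilde{\mathbf b}(\phi))=(A_L\cdots A_1,\ \sum_l A_L\cdots A_{l+1}\mathbf b_l)$ of rank at most $s$, then shrink this neighbourhood so its image lands inside the local-minimality ball of $(A',\mathbf b')$. Your write-up is in fact more explicit than the paper's in several places (the padding when $\operatorname{rank}(A')<s$, the polynomial dependence of the pre-activations, the rank-of-a-product bound, and the assumption that $\mathcal L$ depends only on the outputs at the $\mathbf x_i$), and your flagged ``main obstacle'' --- that the bias shifts in Lemma~\ref{ConstructiveLemma} telescope so that $(\tilde A(\phi'),\tilde{\mathbf b}(\phi'))=(A',\mathbf b')$ exactly --- does check out: with the lemma's choices one gets $A_L\cdots A_1=U\Sigma V=A'$ and the $c$-contributions cancel against $\mathbf b_L=-cA_L\mathbf 1+\mathbf b'$.
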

\begin{proof}
Based on the definition of a local minimium, there exists an open ball $D$ around $(A', \mathbf{b}')$ such that 
\begin{equation}
\label{eq:helper}
      \mathcal{L}(G_{A',\mathbf{b}'}; \{(\mathbf{x}_i,y_i)\}_{i=1}^N) \leq  \mathcal{L}(G_{A,\mathbf{b}}; \{(\mathbf{x}_i,y_i)\}_{i=1}^N) \quad \forall (A,\mathbf{b}) \in D \text{ with } \text{rank}(A) \leq s.
\end{equation}

First, we use the same construction as in the proof of Lemma \ref{ConstructiveLemma} to find a function $F_{\phi'}$ with $F_{\phi'}(\mathbf{x}_i) = G_{A',\mathbf{b}'}(\mathbf{x}_i)$ for all training example $\mathbf{x}_i$. Because the mapping $\phi \mapsto F_{\phi}(\mathbf{x}_i)$ is continuous (not only for the entire network $F$ but also for all subnetworks), and because all activations of $F_{\phi'}$ are greater or equal to $c>0$, there exists an open ball $B(\phi', \delta_1)$ around $\phi'$ such that the activations of $F_{\phi}$ remain positive for all $\mathbf{x}_i$ and all $\phi \in B(\phi', \delta_1)$.

Consequently, the restriction of $F_{\phi}$ to the training set remains affine linear for $\phi \in B(\phi', \delta_1)$. In other words, for any $\phi \in B(\phi', \delta_1)$ we can write 
$$ F_{\phi}(\mathbf{x}_i) = A(\phi)\mathbf{x}_i + \mathbf{b}(\phi) \qquad \forall \mathbf{x}_i, $$
by defining $A(\phi) = A_L A_{L-1} \hdots A_1$ and $\mathbf{b}(\phi) = \sum_{l=1}^L A_L A_{L-1} \hdots A_{l+1} \mathbf{b}_l$. Note that due to $s = \min_i n_i$, the resulting $A(\phi)$ satisfies $\text{rank}(A(\phi))\leq s$. 

After restricting $\phi$ to an open ball  $B(\phi', \delta_2)$, for  $\delta_2\leq \delta_1$ sufficiently small, the above $(A(\phi),\mathbf{b}(\phi))$ satisfy $(A(\phi),\mathbf{b}(\phi))\in D$ for all $\phi \in B(\phi', \delta_2)$. On this set, we, however, already know that the loss can only be greater or equal to  $\mathcal{L}(F_{\phi'}; \{(\mathbf{x}_i,y_i)\}_{i=1}^N)$ due to \eqref{eq:helper}. Thus, $\phi'$ is a local minimum of the underlying loss function.
\end{proof}

\subsection{Additional comments regarding Theorem 1}

Note that our theoretical and experimental results do not contradict theoretical guarantees for deep linear networks \citep{kawaguchi_deep_2016, laurent2018deep} which show that all local minima are global. A deep linear network with $s = \min(n,m)$ is equivalent to a linear classifier, and in this case, the local minima constructed by Theorem \ref{thm:local_minima_linear} are global. However, this observation shows that Theorem \ref{thm:local_minima_linear} characterizes the gap between deep linear and deep nonlinear networks; the global minima predicted by linear network theories are inherited as (usually suboptimal) local minima when ReLU's are added.  Thus, linear networks do not accurately describe the distribution of minima in non-linear networks.

\subsection{Additional results for suboptimal local optima}
Table \ref{tab:localopt2} shows more experiments. As above in the previous experiment, we use gradient descent to train a full ResNet-18 architecture on CIFAR-10 until convergence from different initializations. We find that essentially the same results appear for the deeper architecture, initializing with very high bias leads to highly non-optimal solutions. In this case even solutions that are equally bad as a zero-norm initialization.

Further results on CIFAR-100 are shown in Tables \ref{tab:localopt3} and \ref{tab:localopt4}. These experiments with MLP and ResNet-18 show the same trends as explained above, thus confirming that the results are not specific to the CIFAR-10 dataset.

\begin{table*}
\caption{Local minima for ResNet-18 and CIFAR-10 generated via initialization and trained by vanilla gradient descent, showing loss, euclidean norm of the gradient vector. \label{tab:localopt2}
}
\centering
	\begin{tabular}{crrcrr}\toprule
		& \multicolumn{2}{c}{At Initialization} && \multicolumn{2}{c}{After training} \\
		\cmidrule{2-3} \cmidrule{5-6} 
		Init. Type & Loss & Grad. && Loss & Grad.\\ \midrule
        Default  &  2.30312 & 0.05000  && 0.00014 & 0.01410 \\
        Zero &  2.30258 & 0.00025  && 2.30259 & 0.00013 \\
        Bias+20 & 12.95754 & 590.12170 && 2.30658 & 0.00004 \\
        Bias $\in \mathcal{U}(-10,10)$ &  12.96790 & 214.68600  && 2.30260 & 0.00123 \\
        Bias $\in \mathcal{U}(-50,50)$ &  84.67800 & 1190.23500  && 2.30260 & 0.00702 \\  

		\bottomrule
	\end{tabular}
\end{table*}

\begin{table*}
\caption{Local minima for ResNet-18 and CIFAR-100 generated via initialization and trained by vanilla gradient descent, showing loss, euclidean norm of the gradient vector
\label{tab:localopt3}
}
\centering
	\begin{tabular}{crrcrr}\toprule
		& \multicolumn{2}{c}{At Initialization} && \multicolumn{2}{c}{After training} \\
		\cmidrule{2-3} \cmidrule{5-6} 
		Init. Type & Loss & Grad. && Loss & Grad.\\ \midrule
        Default  & 4.60591  & 0.02346  && 0.00030 & 0.00466 \\
        Zero & 4.60517  & 0.00019  && 4.60517 & 0.00003 \\
        Bias+20 & 34.37053 & 655.51569 && 4.60517 & 0.00015 \\
        Bias $\in \mathcal{U}(-100,100)$ & 178.74391 & 2615.72534  && 4.60517 & 0.00003 \\
		\bottomrule
	\end{tabular}
\end{table*}
\begin{table*}
\caption{Local minima for MLP and CIFAR-100 generated via initialization and trained by vanilla gradient descent, showing loss, euclidean norm of the gradient vector. \label{tab:localopt4}
}
\centering
	\begin{tabular}{crrcrr}\toprule
		& \multicolumn{2}{c}{At Initialization} && \multicolumn{2}{c}{After training} \\
		\cmidrule{2-3} \cmidrule{5-6} 
		Init. Type & Loss & Grad. && Loss & Grad.\\ \midrule
        Default & 4.60670 & 0.16154 && 0.02579 & 0.01482 \\
        Zero & 4.60517 & 0.00019  && 4.60517 & 0.00011 \\
        Bias+10 & 15.77286 & 359.65710 && 4.60517 & 0.00079 \\
        Bias $\in \mathcal{U}(-5,5)$ & 8.69149 & 63.59983 && 2.15917 & 0.09718 \\
        Bias $\in \mathcal{U}(-10,10)$ & 13.02693 & 158.78347  && 2.58368 & 0.09233 \\
		\bottomrule
	\end{tabular}
\end{table*}

\subsection{Details concerning low-norm regularization experiments}
\label{low-norm-appendix}
\par Our experiments comparing regularizers all run for $300$ epochs with an initial learning rate of $0.1$ and decreases by a factor of $10$ at epochs 100, 175, 225, and 275.  We use the SGD optimizer with momentum $0.9$.

\par We also tried negative weight decay coefficients, which leads to ResNet-18 CIFAR-10 performance above $90\%$ while blowing up parameter norm, but this performance is still suboptimal and is not informative concerning the optimality of minimum norm solutions.  One might wonder if high norm-bias coefficients lead to even lower parameter norm than low weight decay coefficients.  This question may not be meaningful in the case of networks with batch normalization.  In the case of ResNet-20 with Fixup, which does not contain running mean and standard deviation, the average parameter $\ell_2$ norm after training with weight decay is $24.51$ while that of models trained with norm-bias is $31.62$.  Below, we perform the same tests on CIFAR-100, a substantially more difficult dataset.  Weight decay coefficients are chosen to be ones used in the original paper for the corresponding architecture.  Norm-bias coefficient/$\mu^2$ is chosen to be $8100/0.005$, $7500/0.001$, and $2000/0.0005$ for ResNet-18, DenseNet-40, and ResNet-20 with Fixup, respectively, using the same heuristic as described in the main body.

\begin{table}[h!]
\begin{centering}
\caption{ResNet-18, DenseNet-40, and ResNet-20 with Fixup initialization trained on normalized CIFAR-100 data with various regularizers.  Numerical entries are given by $\overline{m} (\pm s)$, where $\overline{m}$ is the average accuracy over $10$ runs, and $s$ represents standard error.}
\begin{tabular}{r|c|c|c}
Model & No weight decay ($\%$) & Weight decay ($\%$) & Norm-bias ($\%$) \\ \hline
ResNet & $71.73$ ($\pm 0.25$) & $74.66$ ($\pm 0.17$) & $\mathbf{75.90}$ ($\pm 0.16$) \\ 
DenseNet & $65.61$ ($\pm 0.33$) & $68.98$ ($\pm 0.25$)& $\mathbf{69.24}$ ($\pm 0.11$)\\ 
ResNet Fixup & $1.000$ ($\pm 0.00$) & $65.08$ ($\pm 0.30$)& $\mathbf{65.58}$ ($\pm 0.17$)\\ 
\end{tabular}

\label{norm-bias-normalized_fixup}
\end{centering}
\end{table}

\subsection{Details on the Neural Tangent Kernel experiment}

For further reference, we include details on the NTK sampling during training epochs in Figure \ref{fig:ntk_evo}. We see that the parameter norm (Right) behaves normally (all of these experiments are trained with a standard weight decay parameter of $0.0005$), yet the NTK norm (Left) rapidly increases. Most of this increase, however is scaling of the kernel, as the correlation plot (Middle) is much less drastic. We do see that most change happens in the very first epochs of training, whereas the kernel only changes slowly later on.

\begin{figure}[H]
    \centering
    \includegraphics[width=.32\textwidth]{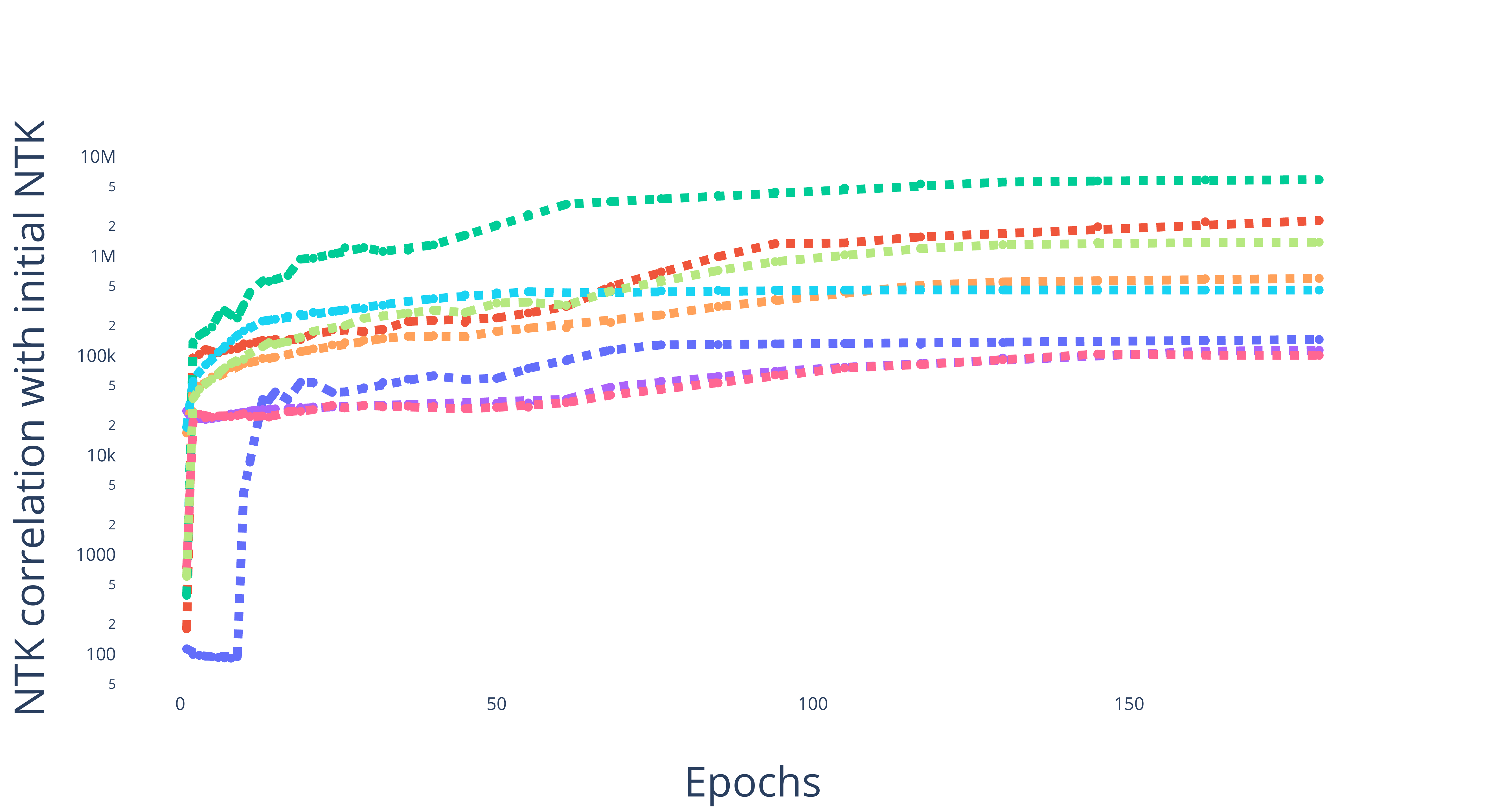}
    \includegraphics[width=.32\textwidth]{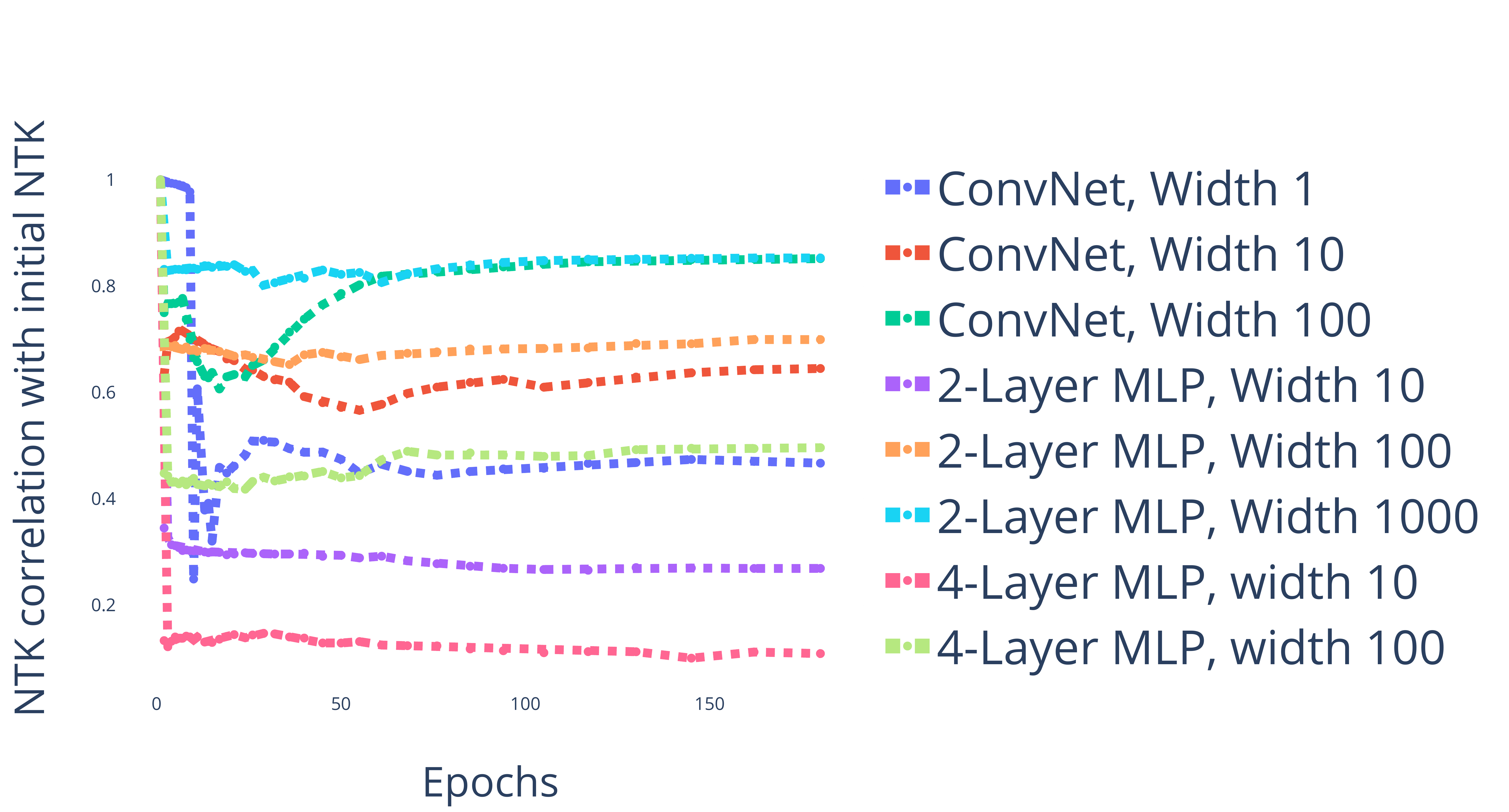}
    \includegraphics[width=.32\textwidth]{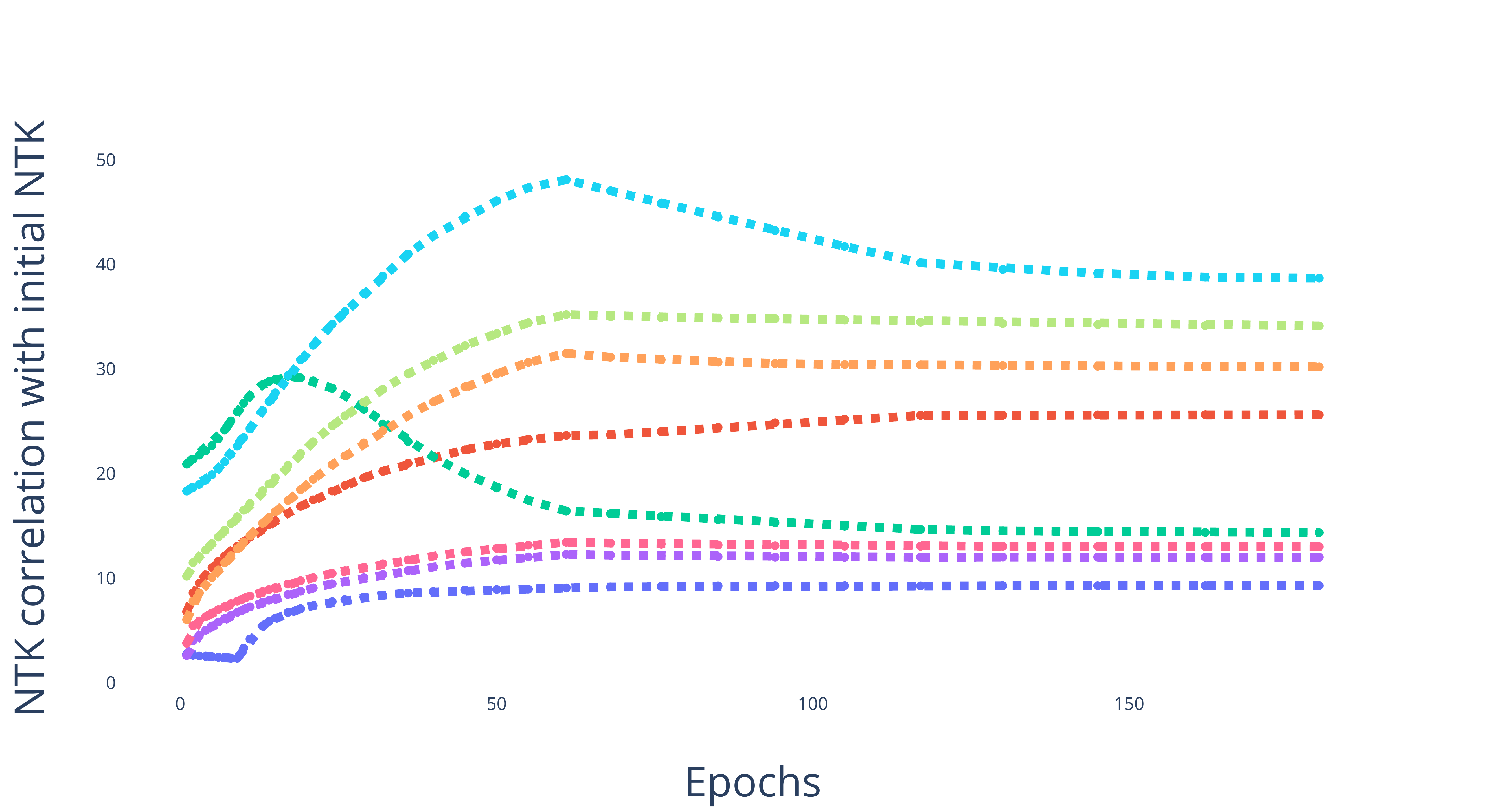}
    \caption{Plotting the evolution of NTK parameters during training epochs. Left: Norm of the NTK Tensor, Middle: Correlation of current NTK iterate versus initial NTK. Right: Reference plot of the network parameter norms.}
    \label{fig:ntk_evo}
\end{figure}

\begin{figure}[H]
\begin{center}
\includegraphics[width=0.49\textwidth]{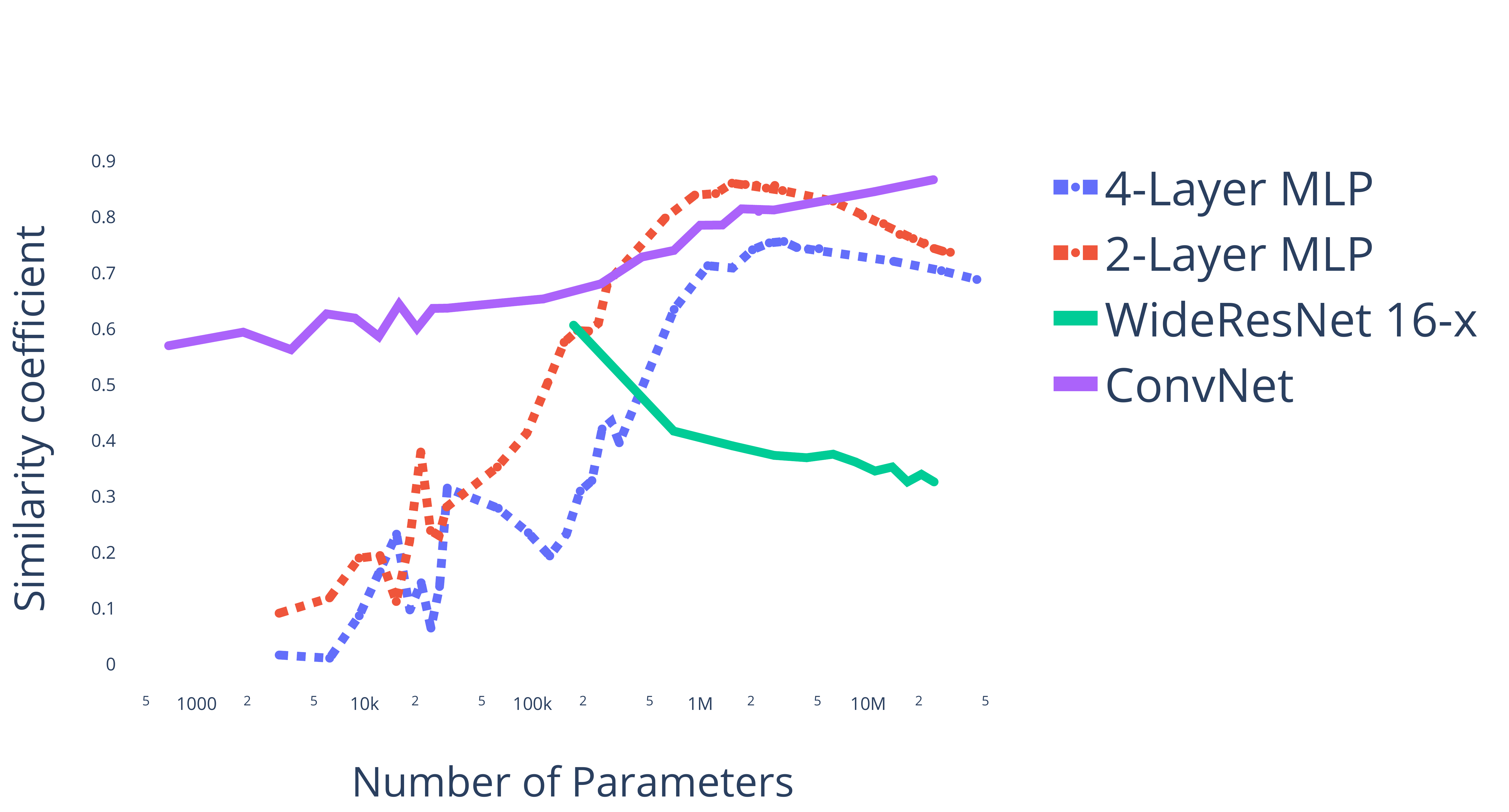}
\includegraphics[width=0.49\textwidth]{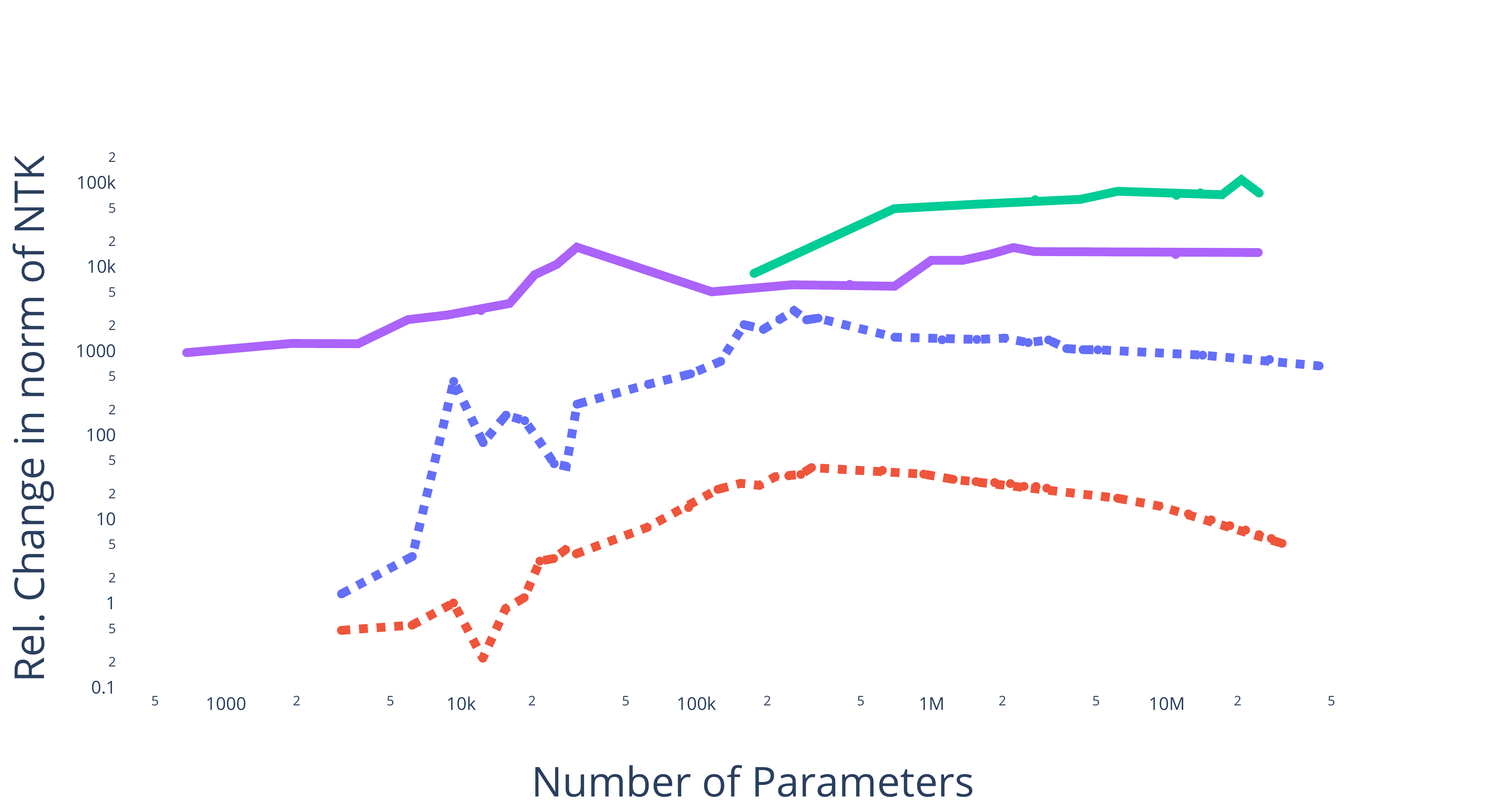}
\end{center}
\caption{The similarity coefficient of the neural tangent kernel after training with its initialization. We expect this coefficient to converge toward 1 in the infinite-width limit for multi-layer networks. Also shown is the direct relative difference of the NTK norms, which behaves similarly to the normalized direct difference from figure \ref{fig:ntk_sampling}.}
\end{figure}

\begin{figure}[H]
\begin{center}
\includegraphics[width=0.49\textwidth]{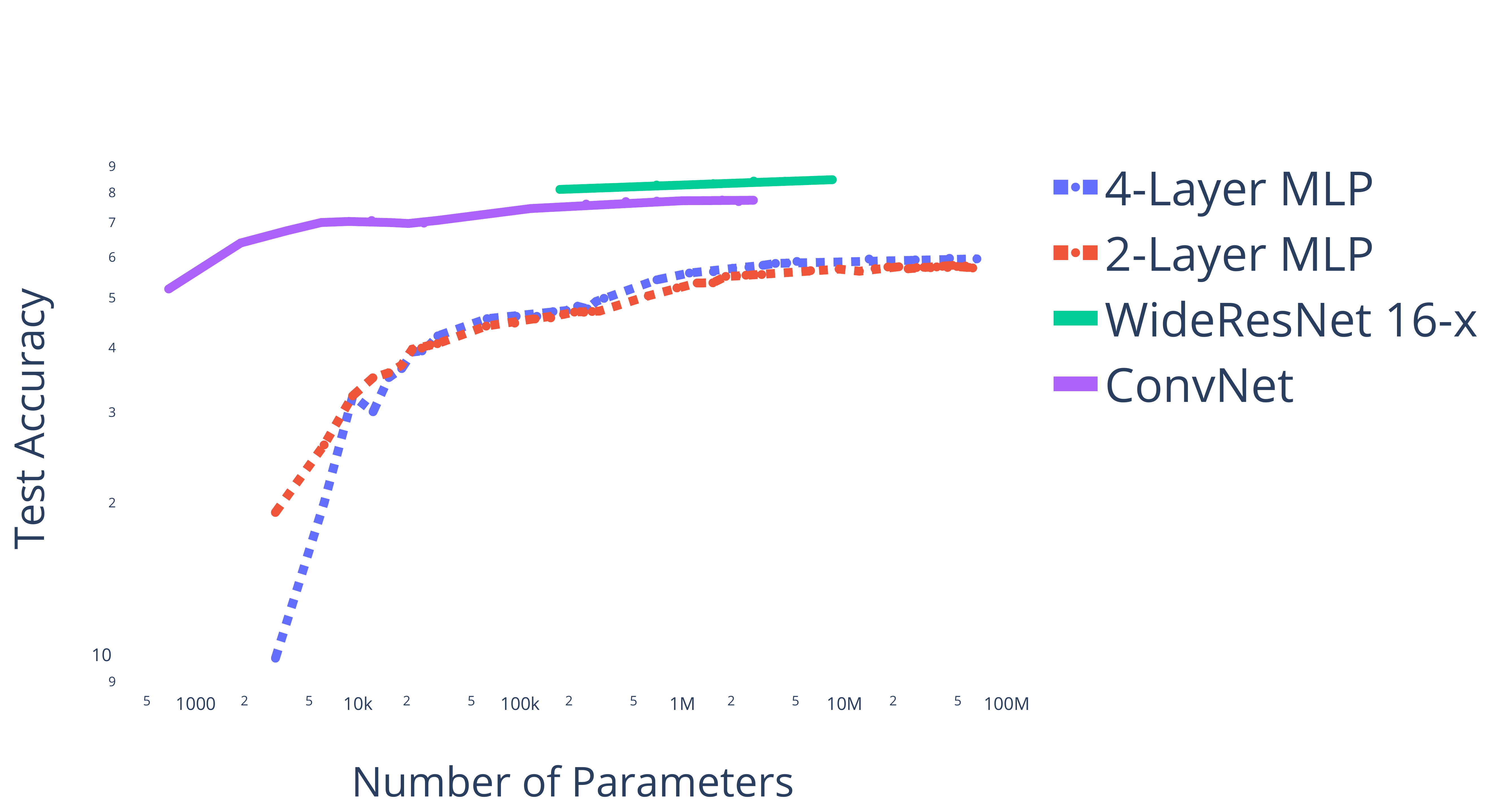}
\includegraphics[width=0.49\textwidth]{images/ntk_stats_change_avg.pdf}
\end{center}
\caption{For reference we record the test accuracy of all models from  \ref{fig:ntk_sampling} in the left plot and the relative change in parameters in the right plot.}
\end{figure}

\begin{figure}[H]
\begin{center}
\includegraphics[width=0.49\textwidth]{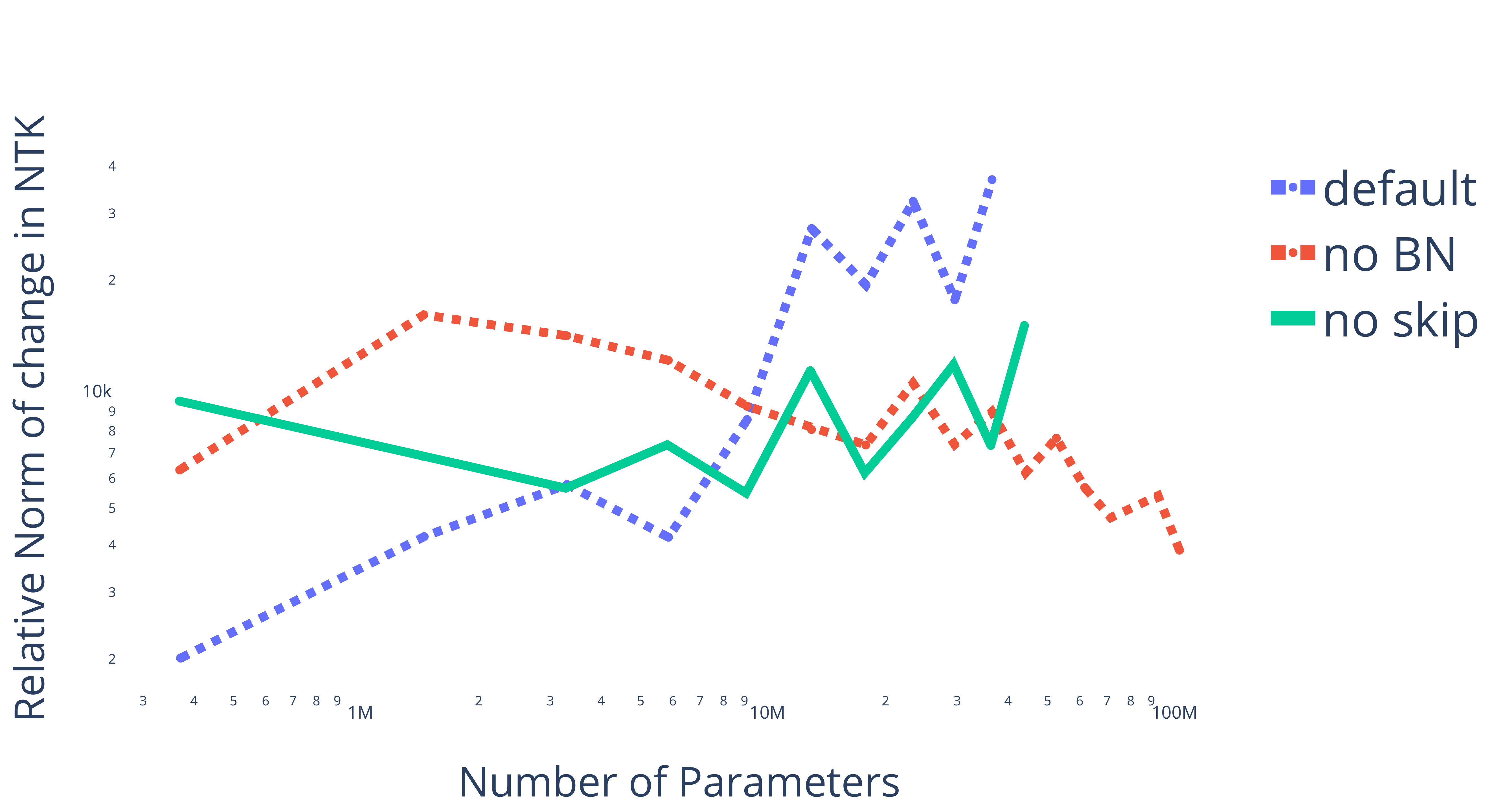}
\includegraphics[width=0.49\textwidth]{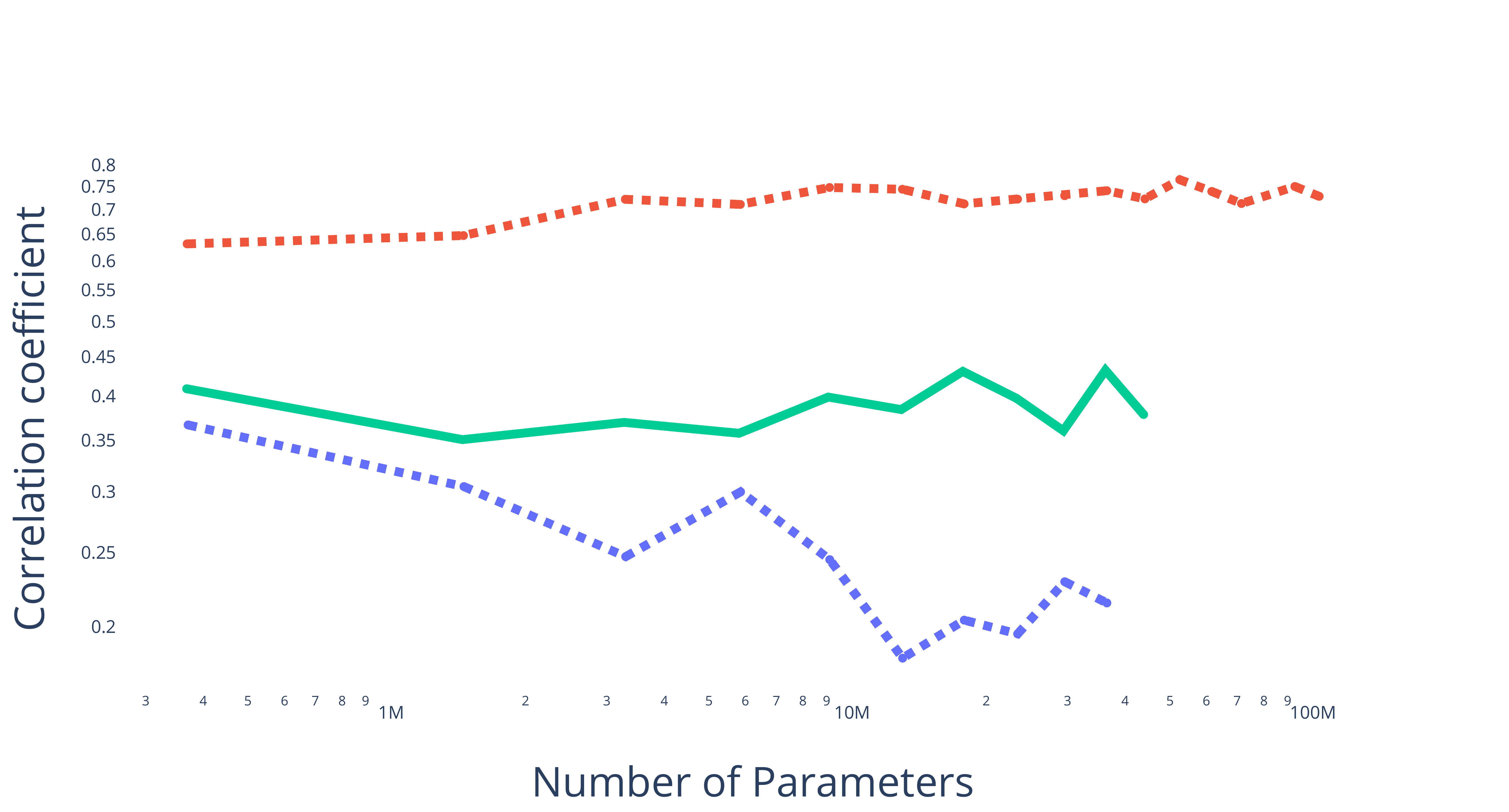}
\end{center}
\caption{The correlation coefficient of the neural tangent kernel after training with its initialization for different WideResNet variants - namely WideResNet without batch normalizations and WideResNet without skip connections. We interestingly find that removing either of both properties, which are widely regarding as beneficial for neural network training, stabilizes the trend seen in the default WideResNet. However both variants hardly converge toward 1, even when sampling very wide ResNets. \label{fig:nskip}}
\end{figure}

\subsection{Details on RankMin and RankMax} 
\label{appendix:rankMinMax}
\par We employ routines to promote both low-rank and high-rank parameter matrices. We do this by computing approximations to the linear operators at each layer. Since convolutional layers are linear operations, we know that there is a matrix whose dimensions are the number of parameters in the input to the convolution and the number of parameters in the output of the convolution. In order to compute low-rank approximations of these operators, one could write down the matrix corresponding to the convolution, and then compute a low-rank approximation using a singular value decomposition (SVD). In order to make this problem computationally tractable we used the method for computing singular values of convolution operators derived in \citet{sedghi2018singular}. We were then able to do low-rank approximation in the classical sense, by setting each singular value below some threshold to zero. In order to compute high-rank operators, we clipped the singular values so that when mulitplying the SVD factors, we set each singular value to be equal to the minimum of some chosen constant and the true singular value. It is important to note here that these approximations to the convolutional layers, when done naively, can return convolutions with larger filters. To be precise, an $n \times n$ filter will map to a $k \times k$ filter through our rank modifications, where $k\geq n$. We follow the method in \citet{sedghi2018singular}, where these filters are pruned back down by only using $n \times n$ entries in the output. 

When naturally training ResNet-18 and Skipless ResNet-18 models, we train with a batch size of 128 for 200 epochs with the learning rate initiated to 0.01 and decreasing by a factor of 10 at epochs 100, 150, 175, and 190 (for both CIFAR-10 and CIFAR-100). When adversarially training these two models on CIFAR-10 data, we use the same hyperparameters. However, in order to adversarially train on CIFAR-100, we train ResNet-18 with a batch size of 256 for 300 epochs with an initial learning rate of 0.1 and a decrease by a factor of 10 at epochs 200 and 250. For adversarially training Skipless ResNet-18 on CIFAR-100, we use a batch size of 256 for 350 epochs with an initial learning rate of 0.1 and a decrease by a factor of 10 at epochs 200, 250, and 300. Adversarial training is done with an $\ell_\infty$ 7-step PGD attack with a step size of $2/255$, and $\epsilon = 8/255$. For all of the training described above we augment the data with random crops and horizontal flips.

During 15 additional epochs of training we manipulate the rank as follows. RankMin and RankMax protocols are employed periodically in the last 15 epochs taking care to make sure that the loss remains small. For these last epochs, the learning rate starts at 0.001 and decreases by a factor of 10 after the third and fifth epochs of the final 15 epochs.
As shown in Table \ref{tab:rank_appendix}, we test the accuracy of each model on clean test data from the corresponding dataset, as well as on adversarial examples generated with 20-step PGD with $\epsilon = 8/255$ (with step size equal to $2/255$) and $\epsilon = 1/255$ (with step size equal to $.25/255$).

When training multi-layer perceptrons on CIFAR-10, we train for 100 epochs with learning rate initialized to 0.01 and decreasing by a factor of 10 at epochs 60, 80 and 90. Then, we train the network for 8 additional epochs, during which RankMin and RankMax networks undergo rank manipulation.

\begin{table}[h!]
\caption{Results from rank experiments with a multi-layer perceptron and CIFAR-10.}
\centering
    \begin{tabular}{r|P{20mm}|P{20mm}|P{20mm}|P{20mm}}
        \multicolumn{5}{c}{MLP and CIFAR-10} \\
         \multicolumn{5}{c}{}\\
        Training method  & Training \newline Accuracy (\%) & Clean \newline Accuracy (\%) & Robust (\%) \newline $\epsilon = 8/255$ & Robust (\%) \newline$\epsilon = 1/255$\\
        \hline
        \hline
        Naturally Trained & 100.00 & 58.79 & 3.76 & 28.94\\
        RankMax & 99.97 & 58.19 & 3.72 & 26.63 \\
        RankMin & 100.00 & 58.06 & 3.76 & 28.48 \\
    \end{tabular}
\label{tab:rank_appendix2}
\end{table}
\begin{table}[H]
\caption{Results from rank experiments on CIFAR-100. Robust accuracy is measured with 20-step PGD attacks with the $\epsilon$ values specified at the top of the column.}\centering
    \begin{tabular}{r|P{20mm}|P{20mm}|P{20mm}|P{20mm}}
        \multicolumn{5}{c}{ResNet-18 and CIFAR-100} \\
         \multicolumn{5}{c}{}\\
        Training method  & Training \newline Accuracy (\%) & Clean \newline Accuracy (\%) & Robust (\%) \newline $\epsilon = 8/255$ & Robust (\%) \newline$\epsilon = 1/255$\\
        \hline
        \hline
        Naturally Trained & 99.97 & 73.08 & 0.00 & 17.5\\
        RankMax & 99.90 & 72.67 & 0.00 &  16.95 \\
        RankMin & 99.92 & 72.57 & 0.00 & 17.63 \\
        \hline
        Adversarially Trained & 99.92 & 50.88 & 17.81 & 45.99 \\
        RankMaxAdv & 99.73 & 51.04 & 16.80 & 45.74 \\
        RankMinAdv & 99.91 & 50.22 & 16.64 & 45.03 \\
         \multicolumn{5}{c}{}\\
    \end{tabular}
    \begin{tabular}{r|P{20mm}|P{20mm}|P{20mm}|P{20mm}}
        \multicolumn{5}{c}{ResNet-18 w/o skip connections and CIFAR-100} \\
         \multicolumn{5}{c}{}\\
        Training method  & Training \newline Accuracy (\%) & Clean \newline Accuracy (\%) & Robust (\%) \newline $\epsilon = 8/255$ & Robust (\%) \newline$\epsilon = 1/255$\\
        \hline
        \hline
        Naturally Trained & 99.96 & 72.13 & 0.01 & 13.7\\
        RankMax & 99.82 & 71.35 & 0.04 & 11.74 \\
        RankMin & 99.90 & 71.28 & 0.00 & 13.53 \\
        \hline
        Adversarially Trained & 99.92 & 50.47 & 17.62 & 45.18 \\
        RankMaxAdv & 99.90 & 50.93 & 17.72 & 45.78 \\
        RankMinAdv & 99.91 & 49.37 & 16.77 & 44.41 \\
         \multicolumn{5}{c}{}\\
    \end{tabular}
\label{tab:rank_appendix}
\end{table}

\end{document}